\documentclass{article}
\usepackage{url}
\usepackage[colorlinks=true,breaklinks=true]{hyperref}

\usepackage{microtype}
\usepackage{graphicx}
\usepackage{subcaption}
\usepackage{booktabs} 
\usepackage{xspace}
\usepackage{enumitem}

\usepackage{hyperref}

\usepackage[most]{tcolorbox}

\newtcolorbox{findingbox}[1]{
    colback=gray!10,      
    colframe=gray!80,     
    arc=2pt,              
    boxrule=0.8pt,        
    left=5pt, right=5pt,  
    top=5pt, bottom=5pt,  
    fontupper=\small,     
    #1                    
}

\usepackage[preprint]{icml2026}

\makeatletter
\renewcommand{\ICML@preprint}{\textit{Preprint.}}
\makeatother

\usepackage{amsmath}
\usepackage{amssymb}
\usepackage{mathtools}
\usepackage{amsthm}

\usepackage[capitalize,noabbrev]{cleveref}

\theoremstyle{plain}
\newtheorem{theorem}{Theorem}[section]

\newtheorem{corollary}[theorem]{Corollary}
\theoremstyle{definition}

\theoremstyle{remark}

\usepackage{colortbl}
\usepackage{xcolor}

\definecolor{lightgray}{gray}{0.93}
\definecolor{lightblue}{rgb}{0.92,0.95,1.0}

\usepackage[textsize=tiny]{todonotes}
 
\icmltitlerunning{Transport and Merge: Cross-Architecture Merging for Large Language Models}

\begin{document}

\twocolumn[
\icmltitle{Transport and Merge: \\ Cross-Architecture Merging for Large Language Models}



  \begin{center}
  {\large
  Chenhang Cui$^{1,*}$,
  Binyun Yang$^{2,*}$,
  Fei Shen$^{1,\dagger}$,
  Yuxin Chen$^{1}$,
  Jingnan Zheng$^{1}$, \\
  Xiang Wang$^{3}$,
  An Zhang$^{3}$,
  Tat-Seng Chua$^{1}$
  }\\
  \vspace{2pt}
  {\small
  $^1$National University of Singapore (NUS), Singapore\\
  $^2$University of Electronic Science and Technology of China (UESTC), China\\
  $^3$University of Science and Technology of China (USTC), China
  }
  \end{center}

  \icmlkeywords{Machine Learning, ICML}

  \vskip 0.3in
]



\printAffiliationsAndNotice{$^{*}$Equal contribution. $^{\dagger}$Corresponding authors.}

\begin{abstract}
Large language models (LLMs) achieve strong capabilities by scaling model capacity and training data, yet many real-world deployments rely on smaller models trained or adapted from low-resource data. This gap motivates the need for mechanisms to transfer knowledge from large, high-resource models to smaller, resource-constrained targets.
While model merging provides an effective transfer mechanism, most existing approaches assume architecture-compatible models and therefore cannot directly transfer knowledge from large high-resource LLMs to heterogeneous low-resource targets.
In this work, we propose a cross-architecture merging framework based on optimal transport (OT) that aligns activations to infer cross-neuron correspondences between heterogeneous models.
The resulting transport matrices are then used to guide direct weight-space fusion, enabling effective high-resource to low-resource transfer using only a small set of inputs.
Extensive experiments across low-resource languages and specialized domains demonstrate consistent improvements over target base models. The code is available at \url{https://github.com/chenhangcuisg-code/Cross-Architecture-Merging-for-\\Large-Language-Models/}.
\end{abstract}

\section{Introduction}
\label{sec:intro}
Large language models (LLMs) have achieved remarkable success in general language understanding and generation~\citep{gpt4,LLaMA3,qwen}, enabling a wide range of downstream applications, e.g., in medicine~\citep{llamamed}, finance~\citep{llamafin}, and education~\citep{llmedu}. This success is largely driven by scaling model capacity and pretraining on massive, high-quality, and diverse corpora.

In practice, many real-world deployments rely on models with limited parameter budgets trained or adapted from low-resource data. Typical examples include low-resource languages such as Malaysian languages \citep{mala_lowres} and Cantonese dialects \citep{can_lowres}. Due to constraints on both model capacity and data availability, models trained from low-resource languages are unable to acquire specific knowledge comparable to large and data-rich models.
A natural direction to address this challenge is high-resource to low-resource transfer \citep{transfer_nl,transfer_bridge,transfer_survey}, which leverages representations learned by large, well-trained models to improve performance in low-resource domains.
Among existing approaches, model merging \citep{first_merge, lm_merge,imfeld2023transformer} has emerged as an alternative to gradient-based adaptation. It directly aggregates parameters from multiple expert models, enabling capability fusion without access to massive training data or repeated optimization.
With the rise of large language models (LLMs)~\citep{llm_modelmerge_survey,multilin_merge,merge_ftllm}, this idea naturally extends to LLM merging.

Despite its appeal, many current techniques \citep{multilin_merge,merge_ftllm,reasoning_transfer} assume that LLMs being merged share the same architecture, enabling direct weight-space operations. This assumption restricts their applicability in low-resource transfer scenarios, where the source model is often a large high-resource LLM, while the target model is a smaller architecture designed for deployment efficiency or domain-specific constraints \citep{mobilellm,retrieval_transfer,crossatt_transfer}. Direct parameter merging between heterogeneous architectures becomes challenging.
Although some recent methods attempt cross-architecture merging or knowledge fusion for LLMs \citep{fuse_llm,fuse_chat}, they typically rely on distillation-based training \citep{distill,llm_distill,llava_distill} to first distill knowledge into the parameter space of a target model, followed by merging among architecture-compatible models.

These limitations naturally raise the following question:
\textbf{Can we enable high-resource to low-resource knowledge transfer across heterogeneous model architectures through direct model merging?}
Motivated by this question, we propose a cross-model merging framework based on {optimal transport} \citep{ot_book,ot_gen} to enable parameter transfer between heterogeneous architectures.
Our approach is motivated by the observation that, despite differences in architecture and model capacity, language models often exhibit correlated internal activations when processing the same inputs \citep{platonic,represent_align}.
Building on this insight, we establish a structured correspondence between heterogeneous models by aligning their internal activations using an optimal transport formulation.
Given a small set of inputs, we extract intermediate activations from both the source and target models and compute cross-model similarity signals at the feature level.
These signals are used to infer cross-neuron relationships between the two models, yielding a global transfer plan that captures transferable structure across layers and neurons.
The resulting transport relationships are then converted into weight-space fusion operators, enabling direct parameter fusion across architectures.

In summary, our contribution is a cross-architecture model merging framework that enables high-resource to low-resource knowledge transfer.
By aligning heterogeneous language models in activation space through an optimal transport formulation, our method infers structured neuron-level correspondences across models with different architectures and capacities using only a small set of inputs.
Crucially, we show that these activation-space correspondences admit a principled weight-space interpretation, allowing them to be converted into direct fusion for parameter transfer across heterogeneous models.
Extensive experiments on low-resource languages and domains demonstrate consistent improvements, establishing our approach as a practical alternative to distillation-based transfer.
As shown in Figure~\ref{fig:cross_domain_comparison}, our method consistently improves performance across different settings under multiple strategies.


\begin{figure}[t]
    \centering
    \includegraphics[width=0.98\linewidth]{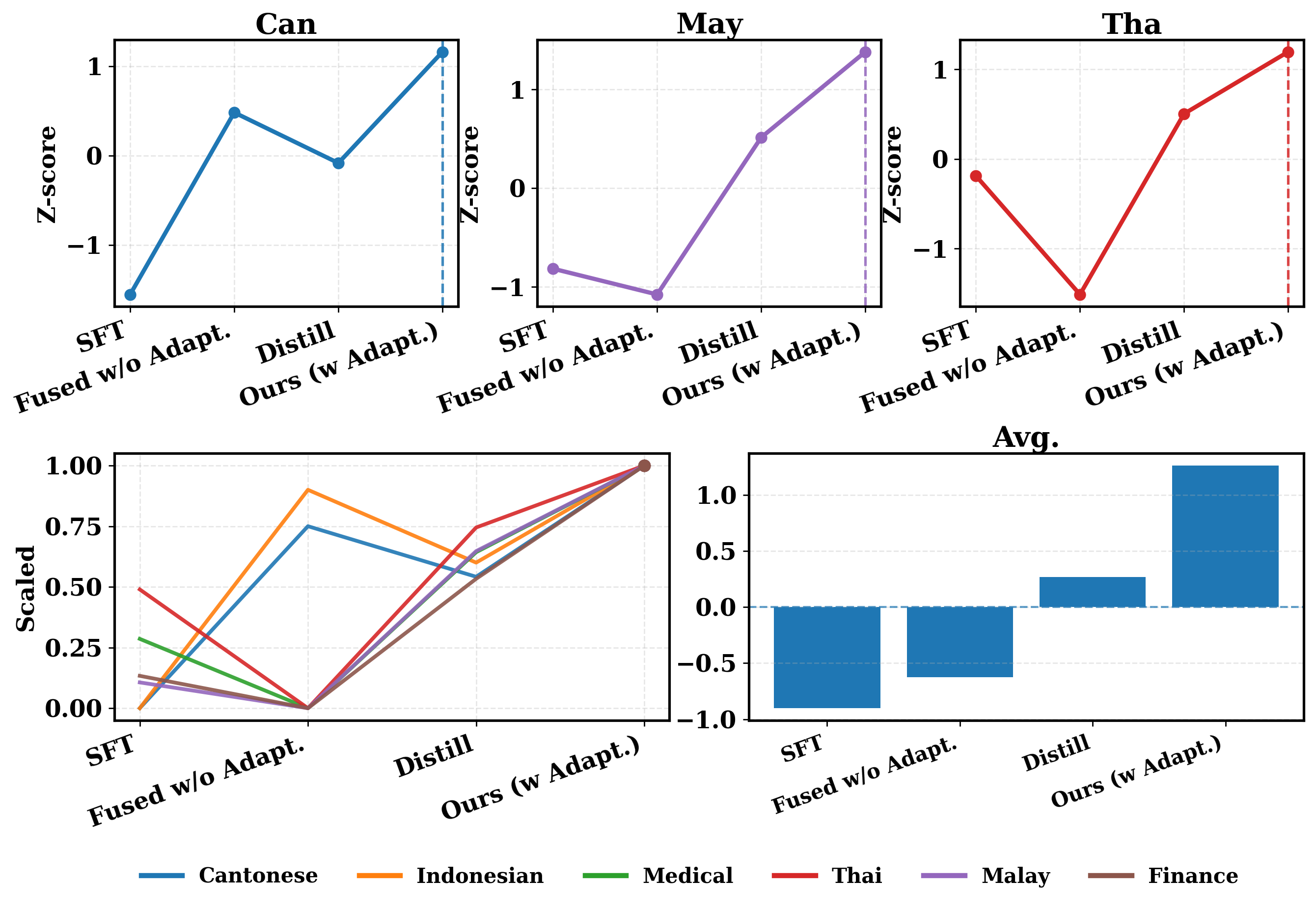}
\caption{
\textbf{Cross-domain comparison using normalized scores} (higher is better).
\textbf{Top:} Domain-specific performance trajectories for Cantonese, Malaysian, and Thai under different transfer strategies.
\textbf{Bottom-left:} Relative improvements within each domain.
\textbf{Bottom-right:} Average normalized performance across all domains.
See Section~\ref{sec:effect} for detailed analysis.
}
    \label{fig:cross_domain_comparison}
    \vspace{-1cm}
\end{figure}

\section{Related Work}
\noindent\textbf{Model Merging.}
Model merging seeks to combine multiple expert models into a single model, typically without access to the original training data and without costly training \citep{ilharco2022editing,wortsman2022model,ainsworth2022git}.
Existing approaches primarily involve two stages: pre-merging and during-merging, covering both foundation models (LLMs and MLLMs) \citep{llm_modelmerge_survey} and other deep learning settings \citep{deep_model_merge}.
Pre-merging methods focus on preparing models for subsequent fusion by modifying their parameters or representations in advance.
Typical approaches include adaptation schemes designed to reduce parameter interference \citep{merge_earlyft,merge_ft} and alignment techniques that map different checkpoints into compatible parameter spaces prior to merging \citep{ot_fusion,transformer_fusion}.
During-merging methods perform the actual fusion, ranging from simple parameter averaging \citep{first_merge} to more structured strategies such as subspace-based merging \citep{subspace_merge}, as well as optimization-based merging \citep{optim_merge}. 
Although some methods attempt cross-architecture merging or knowledge fusion in LLMs \citep{fuse_llm,fuse_chat}, they generally rely on distillation-based training to first transfer knowledge into the parameter space of a target model.


\noindent\textbf{Optimal Transport and Applications.}
Optimal transport (OT) provides a principled framework for comparing probability measures by
finding a minimum-cost plan that transports mass from a source distribution to a target
distribution \citep{ot_book}.
Building on solid theory, OT has become a versatile tool across a broad range of applications.
In machine learning, OT is commonly used for distribution matching in generative modeling \citep{ot_gen}, domain adaptation under distribution shift \citep{ot_da,ot_da_2}, imitation learning \citep{ot_im}, and clustering \citep{ot_cl,ot_clus}.
More recently, OT has been explored as a mechanism for {model merging}, where a transport plan provides a soft correspondence between neurons and enables weight merging across models \citep{imfeld2023transformer,ot_fusion}.
However, while recent studies have used optimal transport (OT) to reveal cross-model feature similarity even between heterogeneous LLMs \citep{represent_align}, how to effectively leverage OT for model merging with different architectures remains underexplored.

\vspace{-10pt}
\section{Preliminary}
\label{sec:prelim}

\subsection{Optimal Transport}
\label{sec:prelim-ot}

Optimal transport (OT) provides a principled way to compute a {soft correspondence} between
two sets of objects under a pairwise cost.
Given a cost matrix $C\in\mathbb{R}^{n\times m}$ and two discrete distributions
$a\in\Delta_n$, $b\in\Delta_m$, OT solves
\begin{equation}
\min_{Q\in\mathbb{R}_+^{n\times m}}
\ \langle C,Q\rangle
\quad
\text{s.t.}\quad
Q\mathbf{1}=a,\ \ Q^\top\mathbf{1}=b.
\label{eq:ot-basic}
\end{equation}
We adopt the entropically regularized variant,
\begin{equation}
\min_{Q\in\mathbb{R}_+^{n\times m}}
\ \langle C,Q\rangle
-\varepsilon H(Q)
\quad
\text{s.t.}\quad
Q\mathbf{1}=a,\ \ Q^\top\mathbf{1}=b,
\label{eq:ot-entropic}
\end{equation}
where $H(Q)= -\sum_{i,j} Q_{ij}(\log Q_{ij}-1)$ and $\varepsilon>0$.
The solution admits the Sinkhorn scaling form \citep{sinkhorn}
$Q=\mathrm{diag}(u)\,K\,\mathrm{diag}(v)$ with $K=\exp(-C/\varepsilon)$, and can be computed
by alternating updates on $u$ and $v$.

\subsection{Neurons and Features in Transformers}
\label{sec:prelim-transformer}

In this paper, we adopt a structural view of {neurons} in Transformer models.
Specifically, we use the term ``neuron'' to refer to a unit in the {weight space} of a linear
sublayer, corresponding to a single row or column of the associated weight matrix.
The values propagated through the network are
referred to as activated {features}.

Specifically, consider a linear transformation
\begin{equation}
y = W x,
\label{eq:lin}
\end{equation}
where $W \in \mathbb{R}^{d_{\mathrm{out}} \times d_{\mathrm{in}}}$.
We distinguish two neuron spaces induced by $W$:
(i) {input-side neurons} correspond to the \emph{columns} of $W$ (indexed by $d_{\mathrm{in}}$), and
(ii) {output-side neurons} correspond to the \emph{rows} of $W$ (indexed by $d_{\mathrm{out}}$).
Accordingly, $x \in \mathbb{R}^{d_{\mathrm{in}}}$ and $y \in \mathbb{R}^{d_{\mathrm{out}}}$ represent
features in the input and output spaces, respectively.


In a Transformer block, the primary linear sublayers include the attention projections
\begin{equation}
Q = W_Q h,\quad K = W_K h,\quad V = W_V h ,
\label{eq:attn-linear}
\end{equation}
and the MLP projections
\begin{equation}
z = W_1 h,\quad h' = \sigma(z),\quad o = W_2 h',
\label{eq:mlp-linear}
\end{equation}
where $\sigma(\cdot)$ denotes an elementwise nonlinearity.
In all these cases, rows and columns of the projection matrices define neurons in weight space,
while each dimension of the activations corresponds to a feature induced by these
neurons.

\begin{figure*}[!t]
    \centering
    \includegraphics[width=\linewidth]{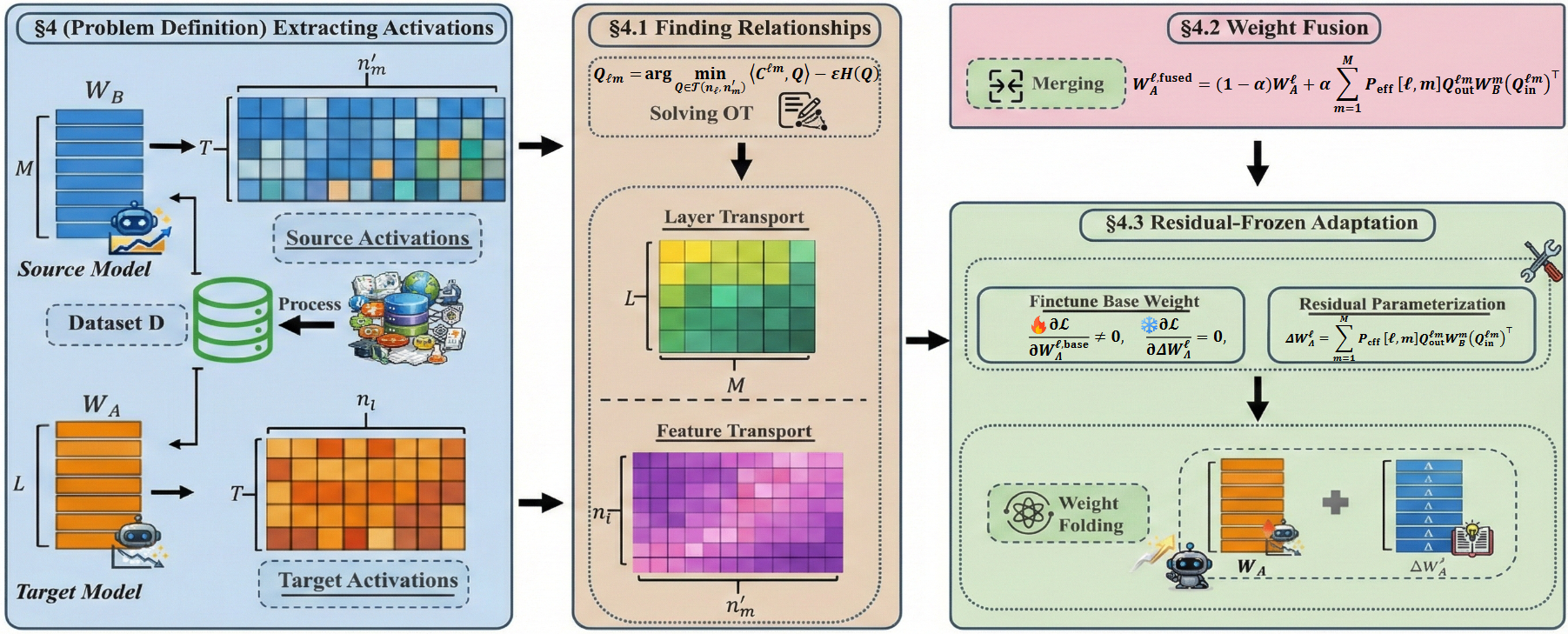}
\caption{
\textbf{Illustration of cross-architecture merging pipeline.}
Given a small dataset $\mathcal{D}$, we extract intermediate activations from a high-resource source model and a low-resource target model with heterogeneous architectures.
We then use optimal transport to infer layer- and feature-level correspondences, and leverage the resulting transport plans for direct parameter fusion.
Finally, the fused model can be optionally refined via residual-frozen adaptation, where the transferred residuals are kept fixed and only base weights are updated.
}
    \label{fig:framework}
\end{figure*}

\section{Method}
\label{sec:method}

\noindent\textbf{Problem Definition.}
Our goal is to merge models across heterogeneous architectures.
We consider a {target model} $M_A$ with $L$ layers and a {source model} $M_B$ with $M$ layers.

Given a set of $T$ samples $\mathcal{D}=\{x_1,\dots,x_T\}$, we record
activations from both models across transformer modules:
\begin{equation}
\begin{aligned}
X_\ell &\in \mathbb{R}^{T \times n_\ell},\quad \ell=1,\dots,L,\\
Y_m &\in \mathbb{R}^{T \times n'_m},\quad m=1,\dots,M,
\end{aligned}
\label{eq:acts}
\end{equation}
where each row corresponds to one input sample and each column corresponds to one feature channel.
Details of the data used for activation extraction and transport estimation are provided in
Appendix~\ref{app:ot_data}.




\subsection{Finding Feature and Layer Relationships using Optimal Transport}
\label{sec:hot}

We use optimal transport (OT)~\citep{ot_book} to infer transport relationship matrices from
activations, which reveal how activated feature channels in a source model can be recombined to
match those in a target model.
Although the ultimate goal is to merge parameters (i.e., neuron weights), directly establishing
neuron-to-neuron correspondences is challenging across heterogeneous architectures.
Instead, we first infer correspondences in the activation space and then lift these
activation-level relationships to the neuron level for parameter transport.

Concretely, for each target layer $\ell$ and source layer $m$, we quantify cross-model similarity at
the level of activation channels by constructing a correlation-based cost matrix
{\small
\begin{equation}
C^{\ell m}[i,j]
=
d_{\mathrm{corr}}\!\left(X_\ell[:,i],\, Y_m[:,j]\right)
\triangleq
1-\rho\!\left(X_\ell[:,i],\, Y_m[:,j]\right),
\label{eq:inner-cost}
\end{equation}
}
where $\rho(\cdot,\cdot)$ denotes the Pearson correlation coefficient.

Based on this cost matrix, we compute a feature relationship matrix
$Q^{\ell m}\in\mathbb{R}^{n_\ell\times n'_m}$ by solving an entropically regularized optimal transport
problem:
\begin{equation}
Q^{\ell m}
=
\arg\min_{Q\in \mathcal{T}(n_\ell,n'_m)}
\ \langle C^{\ell m}, Q\rangle
-\varepsilon H(Q),
\label{eq:feature-ot}
\end{equation}
where $H(Q)= -\sum_{i,j} Q_{ij}(\log Q_{ij}-1)$ and $\varepsilon>0$.
The transportation polytope enforces balanced marginals,
\begin{equation}
\mathcal{T}(n_\ell,n'_m)
=
\Big\{
Q\in \mathbb{R}_+^{n_\ell\times n'_m}:
Q\mathbf{1}=\tfrac{1}{n_\ell}\mathbf{1},\ 
Q^\top\mathbf{1}=\tfrac{1}{n'_m}\mathbf{1}
\Big\}.
\end{equation}

The resulting transport plan $Q^{\ell m}$ specifies how each target activation channel can be
represented as a weighted combination of source channels.
Although inferred purely from activation statistics, $Q^{\ell m}$ will later be used as a
neuron-level mixing operator for transporting source parameters into the target neuron coordinates.

To summarize the compatibility between target layer $\ell$ and source layer $m$, we aggregate the
feature-level correspondence into a scalar transport cost inspired by \citep{represent_align}
\begin{equation}
C_{\mathrm{layer}}[\ell,m]
=
\langle C^{\ell m}, Q^{\ell m}\rangle,
\label{eq:layer-cost}
\end{equation}
which yields a layer-to-layer cost matrix
$C_{\mathrm{layer}}\in\mathbb{R}^{L\times M}$.

Using these layer-wise costs, we further infer a global correspondence matrix
$P\in\mathbb{R}^{L\times M}$ by solving another entropically regularized optimal transport problem:
\begin{equation}
P
=
\arg\min_{P\in \mathcal{T}(L,M)}
\ \langle C_{\mathrm{layer}}, P\rangle
-\eta H(P),
\label{eq:layer-ot}
\end{equation}
where $\eta>0$ and
\begin{equation}
\mathcal{T}(L,M)
=
\Big\{
P\in \mathbb{R}_+^{L\times M}:
P\mathbf{1}=\tfrac{1}{L}\mathbf{1},\ 
P^\top\mathbf{1}=\tfrac{1}{M}\mathbf{1}
\Big\}.
\end{equation}

Each entry $P_{\ell m}$ quantifies the degree of correspondence between target layer $\ell$ and source
layer $m$.
Together, the feature-level transport plans $\{Q^{\ell m}\}$ and the global matrix $P$
specify how source neurons are used to construct target neurons.
All optimal transport problems are solved using Sinkhorn iterations \citep{sinkhorn}; implementation details about the algorithm are
provided in Appendix~\ref{app:ot}.

\subsection{From Activated Features to Neurons: Weight Fusion via Feature and Layer Transport}
\label{sec:bilateral-fusion}

Our transport matrices are estimated from {activated features}, while the ultimate objective
is to merge {neurons} and their associated parameters.
The key observation is that each feature dimension corresponds to a well-defined neuron axis in
the underlying weight matrix, specifically, a column in the input space or a row in the output space.
This correspondence allows us to lift feature-level relations into neuron-level mixing operators,
which are then used to transport source weights into the target neuron basis.

\noindent\textbf{Feature- and Layer-wise Transport.}
In attention modules, input-side (pre-projection) and output-side (post-projection) features lie in
different representation spaces and may induce different cross-layer alignment signals.
Accordingly, we estimate two feature-level transport plans,
$Q^{\ell m}_{\mathrm{in}}$ from pre-activation features and
$Q^{\ell m}_{\mathrm{out}}$ from post-activation features.
At the layer level, we compute two compatibility costs by aggregating the corresponding feature-level
transport objectives, and solve the layer-level OT \emph{separately} for the pre- and post-sides,
yielding $P_{\mathrm{pre}}$ and $P_{\mathrm{post}}$, respectively.
We then combine them into an effective layer correspondence
\begin{equation}
P_{\mathrm{eff}}[\ell,m]
=
\sqrt{P_{\mathrm{pre}}[\ell,m]\cdot P_{\mathrm{post}}[\ell,m]},
\label{eq:peff}
\end{equation}

\noindent\textbf{Selective transport via top-$k$ neuron replacement.}
To improve robustness across heterogeneous architectures and avoid unnecessary interference,
we restrict fusion to a small subset of neurons that are highly activated on the same dataset
$\mathcal{D}$ used for alignment.
Concretely, for each layer $\ell$ and each projection module,
we run the target model on $\mathcal{D}$ and record neuron activations via forward hooks.
For each neuron, we compute an activation strength score as the mean absolute activation over
samples  
$
s_j
=
\frac{1}{T}
\sum_{t=1}^{T}
\bigl| h_{t,j} \bigr|,
$
where $h_{t,j}$ denotes the activation of neuron $j$ at sample $t$.
We then select the top-$k$ neurons with the largest scores in each layer and projection module,
and record their indices. Details about selection can be found in Appendix \ref{app:top-k}

Fusion is applied {only} to these selected neuron indices, while all remaining parameters
in the target model are left unchanged.
For attention projections, this corresponds to replacing the rows or columns of the weight
matrices associated with the selected neurons.
This design yields a {partial weight fusion} scheme that injects source knowledge only into
neurons that are both highly active on the target data and well-aligned across models.
Under this design, the fused weights at target layer $\ell$ can be written as: \vspace{-5pt}
\begin{equation}
\small
\begin{aligned}
&W_A^{\ell,\mathrm{fused}}
=W_A^\ell
\\
&\hspace{0.5em}
+\alpha \cdot \mathbf{M}^\ell \odot
\Bigg(
\sum_{m=1}^M
P_{\mathrm{eff}}[\ell,m]\,
Q^{\ell m}_{\mathrm{out}}\,
W_B^m\,
\bigl(Q^{\ell m}_{\mathrm{in}}\bigr)^{\!\top}-
W_A^\ell
\Bigg),
\end{aligned}
\label{eq:weight-fusion-topW}
\end{equation}

where $\mathbf{M}^\ell$ is a binary neuron-level mask that is nonzero only on the selected
top-$k$ neuron indices, $\odot$ denotes masking at the neuron level, and
$\alpha\in[0,1]$ controls the strength of replacement.
Bias terms are handled analogously.

\subsection{Residual-Frozen Adaptation after Fusion}
\label{sec:residual_adapt}

After fusion, only a small subset of neurons in the target model are modified by
transported source parameters, while the remaining parameters remain identical to the original
target model.
During this stage we {freeze the transported residual parameters} and train only
the original base parameters of the target model.
This design ensures that the transferred knowledge is not overwritten by subsequent optimization,
while still allowing the target model to adapt to the downstream task distribution.
Empirically, this residual-frozen adaptation consistently outperforms training the target model
alone under the same conditions, indicating that fusion provides a strong and stable initialization.

\noindent\textbf{Residual Parameterization.}
We parameterize fusion through a residual form that separates transferred neuron-level
knowledge from the original model parameters.
Specifically, weight of target layer $\ell$ is expressed as
\begin{equation}
W_A^{\ell,\mathrm{fused}}
=
W_A^{\ell,\mathrm{base}}
+
\alpha \cdot \mathbf{M}^\ell \odot \Delta W_A^\ell,
\label{eq:residual-param-masked}
\end{equation}
where $W_A^{\ell,\mathrm{base}}$ denotes the original weights of the target model $M_A$,
$\mathbf{M}^\ell$ is a neuron-level mask that is nonzero only on the selected top-$k$ neurons,
$\odot$ denotes neuron-level masking, and $\alpha\in[0,1]$ controls the strength of replacement.
The transported residual $\Delta W_A^\ell$ aggregates source operators across layers via
activation-aligned transport,
\begin{equation}
\Delta W_A^\ell
=
\sum_{m=1}^{M}
P_{\mathrm{eff}}[\ell,m]\,
Q^{\ell m}_{\mathrm{out}}\,
W_B^m\,
(Q^{\ell m}_{\mathrm{in}})^{\!\top},
\label{eq:residual-def}
\end{equation}
where $P_{\mathrm{eff}}[\ell,m]$ is the effective layer correspondence and
$Q^{\ell m}_{\mathrm{in}}, Q^{\ell m}_{\mathrm{out}}$ are the feature-level transport maps inferred
from activation statistics.
Only the neurons selected by $\mathbf{M}^\ell$ participate in fusion; all others
remain unchanged.

\noindent\textbf{Residual-Freezing and Adaptation.}
During post-fusion adaptation, we freeze the transported residual $\Delta W_A^\ell$ entirely and
optimize only the base parameters $W_A^{\ell,\mathrm{base}}$.
Gradients are therefore applied uniformly to the base weights, without any neuron-level freezing:
\begin{equation}
\frac{\partial \mathcal{L}}{\partial \Delta W_A^\ell} = 0,
\qquad
\frac{\partial \mathcal{L}}{\partial W_A^{\ell,\mathrm{base}}} \neq 0.
\label{eq:residual-freeze}
\end{equation}
This separation preserves the transferred neuron-level representations while allowing the target
model to adapt globally.

\noindent\textbf{Weight Folding.}
After adaptation, we fold the residual back into the base weights to obtain a
parameter representation for inference:
\begin{equation}
W_A^{\ell,\mathrm{final}}
=
W_A^{\ell,\mathrm{base}}
+
\alpha \cdot \mathbf{M}^\ell \odot \Delta W_A^\ell.
\label{eq:weight-fold}
\end{equation}
This folding operation yields a model
that is architecturally identical to the target model, with transferred neuron-level
knowledge permanently absorbed into its parameters.

We summarize the overall transport-and-merge procedure with masked neuron replacement in
Alg.~\ref{alg:transport_merge}.

\begin{algorithm}[t]
\caption{Transport and Merge for Cross-Architecture Model Merging}
\label{alg:transport_merge}
\begin{algorithmic}[1]
\REQUIRE Target model $M_A$ with $L$ layers, source model $M_B$ with $M$ layers, samples $\mathcal{D}=\{x_t\}_{t=1}^T$, fusion coefficient $\alpha$
\ENSURE Fused target model $M_A^{\text{fused}}$

\STATE \textbf{Activation extraction:}
\STATE Run $M_A$ and $M_B$ on $\mathcal{D}$ to obtain activations $\{X_\ell\}_{\ell=1}^L$ and $\{Y_m\}_{m=1}^M$

\STATE \textbf{Feature-level transport:}
\FOR{each target layer $\ell$ and source layer $m$}
    \STATE Construct cost matrix $C^{\ell m}$ (Eq.~\ref{eq:inner-cost})
    \STATE Solve entropic OT to obtain $Q^{\ell m}_{\mathrm{in}}, Q^{\ell m}_{\mathrm{out}}$ (Eq.~\ref{eq:feature-ot})
\ENDFOR

\STATE \textbf{Layer-level transport:}
\STATE Compute pre/post layer costs from the corresponding $Q^{\ell m}_{\mathrm{in}}$ and $Q^{\ell m}_{\mathrm{out}}$ (Eq.~\ref{eq:layer-cost})
\STATE Solve OT to obtain $P_{\mathrm{pre}}$ and $P_{\mathrm{post}}$ (Eq.~\ref{eq:layer-ot})
\STATE Compute $P_{\mathrm{eff}}$ (Eq.~\ref{eq:peff})

\STATE \textbf{Weight fusion:}
\FOR{each target layer $\ell$}
    \STATE Fuse weights using Eq.~\ref{eq:weight-fusion-topW}
\ENDFOR

\STATE \textbf{Optional post-fusion adaptation:}
\STATE Freeze transferred residuals, fine-tune base parameters, and fold residuals for inference (Eqs.~\ref{eq:residual-param-masked}--\ref{eq:weight-fold})

\STATE \textbf{Output:} $\mathcal{M}_{\!A}^{\text{fused}}$
\end{algorithmic}
\end{algorithm}

\subsection{Weight Transport as Representation-Space Transfer}
\label{sec:transport-interpretation}

Transporting source weights into the target model admits an exact interpretation in representation
space: it is equivalent to mapping target features into the source feature coordinates, applying the
source linear map there, and mapping the result back to the target space. 

\noindent\textbf{Setup.}
Fix a target sublayer $\ell$ in $M_A$ and a source sublayer $m$ in $M_B$ with linear operators (bias omitted)
\[
W_A^\ell \in \mathbb{R}^{d_{A,\mathrm{out}}^\ell \times d_{A,\mathrm{in}}^\ell},
\qquad
W_B^m \in \mathbb{R}^{d_{B,\mathrm{out}}^m \times d_{B,\mathrm{in}}^m}.
\]
Let $Q^{\ell m}_{\mathrm{in}}$ and $Q^{\ell m}_{\mathrm{out}}$ be transport relationship matrices estimated from
pre- and post-activation features. Define the induced coordinate maps
\begin{equation}
\Phi^{\ell m}_{\mathrm{in}} \triangleq (Q^{\ell m}_{\mathrm{in}})^\top,
\qquad
\Phi^{\ell m}_{\mathrm{out}} \triangleq Q^{\ell m}_{\mathrm{out}}.
\label{eq:transport-maps}
\end{equation}
We then define the transported source operator acting on target features as
\begin{equation}
\widetilde{W}_{B\to A}^{\ell m}
\triangleq
\Phi^{\ell m}_{\mathrm{out}}\, W_B^m\, \Phi^{\ell m}_{\mathrm{in}}.
\label{eq:transported-weight}
\end{equation}

\begin{theorem}[Representation-Space Interpretation of Weight Transport]
\label{thm:rep-transfer}
For any target representation $h_A\in\mathbb{R}^{d_{A,\mathrm{in}}^\ell}$ and any $(\ell,m)$,
\begin{equation}
\widetilde{W}_{B\to A}^{\ell m} h_A
=
\Phi^{\ell m}_{\mathrm{out}}
\Big(
W_B^m\big(\Phi^{\ell m}_{\mathrm{in}} h_A\big)
\Big),
\label{eq:rep-transfer}
\end{equation}
\end{theorem}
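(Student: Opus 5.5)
The identity in Theorem~\ref{thm:rep-transfer} should follow from the definition \eqref{eq:transported-weight} and associativity of matrix multiplication. So the plan is to fix dimensions so that every product is well defined, and then regroup the factors.

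\textbf{Step 1: check that the shapes are consistent.} By \eqref{eq:feature-ot}, the plan $Q^{\ell m}_{\mathrm{in}}$ is estimated between the target input features of layer $\ell$ and the source input features of layer $m$, so $Q^{\ell m}_{\mathrm{in}}\in\mathbb{R}^{d_{A,\mathrm{in}}^\ell\times d_{B,\mathrm{in}}^m}$. Likewise $Q^{\ell m}_{\mathrm{out}}\in\mathbb{R}^{d_{A,\mathrm{out}}^\ell\times d_{B,\mathrm{out}}^m}$. By \eqref{eq:transport-maps} this gives
\[
\Phi^{\ell m}_{\mathrm{in}}\in\mathbb{R}^{d_{B,\mathrm{in}}^m\times d_{A,\mathrm{in}}^\ell},
\qquad
\Phi^{\ell m}_{\mathrm{out}}\in\mathbb{R}^{d_{A,\mathrm{out}}^\ell\times d_{B,\mathrm{out}}^m}.
\]
Hence $\Phi^{\ell m}_{\mathrm{in}}h_A\in\mathbb{R}^{d_{B,\mathrm{in}}^m}$ is a valid input to $W_B^m$. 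The output $W_B^m\Phi^{\ell m}_{\mathrm{in}}h_A$ lies in $\mathbb{R}^{d_{B,\mathrm{out}}^m}$, which is the domain of $\Phi^{\ell m}_{\mathrm{out}}$. The composite therefore lands in $\mathbb{R}^{d_{A,\mathrm{out}}^\ell}$, as does $\widetilde{W}^{\ell m}_{B\to A}h_A$.

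\textbf{Step 2: regroup the product.} Substitute \eqref{eq:transported-weight} to get $\widetilde{W}^{\ell m}_{B\to A}h_A=(\Phi^{\ell m}_{\mathrm{out}}W_B^m\Phi^{\ell m}_{\mathrm{in}})h_A$. Associativity of matrix multiplication, applied twice, regroups this as $\Phi^{\ell m}_{\mathrm{out}}\bigl(W_B^m(\Phi^{\ell m}_{\mathrm{in}}h_A)\bigr)$, which is \eqref{eq:rep-transfer}. Since $h_A$ and $(\ell,m)$ were arbitrary, the theorem follows.

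\textbf{Possible extensions.} As remarks, I would add two consequences that follow by linearity. First, summing the identity with weights $P_{\mathrm{eff}}[\ell,m]$ shows that the residual $\Delta W_A^\ell$ in \eqref{eq:residual-def} acts on $h_A$ as a $P_{\mathrm{eff}}$-weighted mixture of these "lift, apply source map, project back" paths. Second, the top-$k$ mask in \eqref{eq:weight-fusion-topW} only restricts which rows or columns are used, so the identity still holds entrywise on the selected neurons.

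\textbf{Main obstacle.} The argument has no analytic difficulty. The only point needing care is the bookkeeping in Step 1. It must be consistent with the orientation convention of \eqref{eq:feature-ot}, where $Q$ is target-by-source. That convention is what justifies the transpose on the input side and no transpose on the output side in \eqref{eq:transport-maps}. If a plan had been oriented the other way, the products would not typecheck, so fixing this convention explicitly is the step to state carefully.
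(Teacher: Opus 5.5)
Your proof is correct and matches the paper's: substitute the definition $\widetilde{W}_{B\to A}^{\ell m}=\Phi^{\ell m}_{\mathrm{out}}W_B^m\Phi^{\ell m}_{\mathrm{in}}$ and regroup by associativity; your Step~1 shape check is bookkeeping that the paper leaves implicit. One small correction to your optional second remark: only a row (output-side) mask gives the identity ``entrywise on the selected neurons,'' whereas a column mask $\mathbf{M}^\ell\odot X = X D$ (with $D$ the diagonal $0/1$ selector) gives the same identity with $h_A$ replaced by $D h_A$, i.e., with $\Phi^{\ell m}_{\mathrm{in}}$ replaced by $\Phi^{\ell m}_{\mathrm{in}} D$.
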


Our fused target operator aggregates transported source operators across layers:
\begin{equation}
W_A^{\ell,\mathrm{fused}}
=
(1-\alpha)\,W_A^\ell
+
\alpha\sum_{m=1}^{M} P_{\mathrm{eff}}[\ell,m]\,
\widetilde{W}_{B\to A}^{\ell m}.
\label{eq:fused-weight}
\end{equation}
Combining Eq.~\ref{eq:fused-weight} with Theorem~\ref{thm:rep-transfer} shows that fusion realizes a
mixture of source-space computations driven by target features transferred into the corresponding
source coordinate systems.

\begin{corollary}[Uniqueness Under Invertible Coordinate Maps]
\label{cor:invertible-alignment}
If $\Phi^{\ell m}_{\mathrm{in}}$ and $\Phi^{\ell m}_{\mathrm{out}}$ are invertible, then
$\widetilde{W}_{B\to A}^{\ell m}$ is the unique operator on the target space that reproduces the
source-layer computation under the target-to-source coordinate transfer in
Eq. \ref{eq:rep-transfer}.
\end{corollary}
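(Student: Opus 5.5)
The identity is essentially associativity of matrix multiplication, so the plan is to make that explicit after checking the shapes. The only point needing care is dimensional consistency of the transport maps, since $Q^{\ell m}_{\mathrm{in}}$ and $Q^{\ell m}_{\mathrm{out}}$ come from the feature-level OT problem in Eq.~\ref{eq:feature-ot}. We will take them to be estimated on the input- and output-side feature channels of the two sublayers. Then $Q^{\ell m}_{\mathrm{in}}\in\mathbb{R}^{d_{A,\mathrm{in}}^\ell\times d_{B,\mathrm{in}}^m}$ and $Q^{\ell m}_{\mathrm{out}}\in\mathbb{R}^{d_{A,\mathrm{out}}^\ell\times d_{B,\mathrm{out}}^m}$, because rows index target channels and columns index source channels in $\mathcal{T}(n_\ell,n'_m)$. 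By Eq.~\ref{eq:transport-maps} this gives $\Phi^{\ell m}_{\mathrm{in}}\in\mathbb{R}^{d_{B,\mathrm{in}}^m\times d_{A,\mathrm{in}}^\ell}$, a map from target input coordinates to source input coordinates, and $\Phi^{\ell m}_{\mathrm{out}}\in\mathbb{R}^{d_{A,\mathrm{out}}^\ell\times d_{B,\mathrm{out}}^m}$, a map from source output coordinates back to target output coordinates.

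With these shapes, the product in Eq.~\ref{eq:transported-weight} is well defined. It is a $(d_{A,\mathrm{out}}^\ell\times d_{B,\mathrm{out}}^m)(d_{B,\mathrm{out}}^m\times d_{B,\mathrm{in}}^m)(d_{B,\mathrm{in}}^m\times d_{A,\mathrm{in}}^\ell)$ product, so it lies in $\mathbb{R}^{d_{A,\mathrm{out}}^\ell\times d_{A,\mathrm{in}}^\ell}$, the same space as $W_A^\ell$.

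Next, fix any $h_A\in\mathbb{R}^{d_{A,\mathrm{in}}^\ell}$ and regard it as a $d_{A,\mathrm{in}}^\ell\times 1$ matrix. By associativity of matrix multiplication,
\[
\widetilde{W}_{B\to A}^{\ell m}h_A=\bigl(\Phi^{\ell m}_{\mathrm{out}}W_B^m\Phi^{\ell m}_{\mathrm{in}}\bigr)h_A=\Phi^{\ell m}_{\mathrm{out}}\Bigl(W_B^m\bigl(\Phi^{\ell m}_{\mathrm{in}}h_A\bigr)\Bigr),
\]
which is exactly Eq.~\ref{eq:rep-transfer}. Equivalently, composition of the linear maps $h\mapsto\Phi^{\ell m}_{\mathrm{in}}h$, $u\mapsto W_B^m u$ and $v\mapsto\Phi^{\ell m}_{\mathrm{out}}v$ is represented by the product of their matrices. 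Nothing beyond linearity is needed. In particular, the nonnegativity and marginal constraints on the $Q$'s, and the specific entropic solution, play no role.

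There is no real obstacle here. The only step that needs care is fixing the orientation convention (target $\times$ source) for the $Q$'s so that the transpose in $\Phi^{\ell m}_{\mathrm{in}}$ produces compatible dimensions. If the paper's convention differed, the same argument would go through with the transposes adjusted.

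For the subsequent Corollary, the same identity gives the uniqueness claim. Any operator $T$ satisfying $Th_A=\Phi^{\ell m}_{\mathrm{out}}W_B^m\Phi^{\ell m}_{\mathrm{in}}h_A$ for all $h_A$ agrees with $\widetilde{W}_{B\to A}^{\ell m}$ on every vector, hence equals it. Invertibility, which forces equal dimensions, additionally lets one write $W_B^m=(\Phi^{\ell m}_{\mathrm{out}})^{-1}\widetilde{W}_{B\to A}^{\ell m}(\Phi^{\ell m}_{\mathrm{in}})^{-1}$, so the correspondence between the source and transported operators is bijective.
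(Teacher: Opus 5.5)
Your uniqueness argument is the same as the paper's: any linear $U$ with $Uh_A=\Phi^{\ell m}_{\mathrm{out}}W_B^m\Phi^{\ell m}_{\mathrm{in}}h_A$ for all $h_A$ must equal $\widetilde{W}_{B\to A}^{\ell m}$, because linear maps that agree on every vector coincide; as you implicitly note, this step does not actually use invertibility. Your extra remark that invertibility makes $W_B^m\mapsto\widetilde{W}_{B\to A}^{\ell m}$ a bijection is a correct addition, and it gives the hypothesis a real role, which the paper's proof does not.
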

Proofs are provided in Appendix~\ref{app:transport-proof}.





\begin{table}[t]
\caption{Low-resource language transfer results on MalayMMLU.
Accuracy (\%; higher is better).}
\centering
\small
\setlength{\tabcolsep}{4pt}
\resizebox{\columnwidth}{!}{
\begin{tabular}{lccc}
\toprule
Category & base model & Fused w/o adaptation & Fused w/ adaptation \\
\midrule
Humanities      & 42.30 & 46.19 & \textbf{48.81} \\
Language        & 38.37 & 40.52 & \textbf{41.68} \\
Others          & 44.50 & 46.63 & \textbf{48.69} \\
STEM            & 42.61 & 45.19 & \textbf{46.58} \\
Social Science & 40.92 & 43.77 & \textbf{46.91} \\
\bottomrule
\end{tabular}
}
\vspace{-5pt}
\label{tab:malay}
\end{table}

\section{Experiments}

\subsection{High- to Low-Resource Language Transfer}

\noindent\textbf{Setup.}
We refer to the original unfused target model as the \textbf{Base Model}. We consider two variants of our method: {Fused w/o Adaptation}, which performs transport-guided parameter fusion without any post-training, and {Fused w/ Adaptation}, which applies an adaptation stage. To ensure experimental consistency, we fix the source architecture to the LLaMA-3 8B family \citep{LLaMA3} in
this section.
Comparisons with other model families are deferred to later experiments in Section \ref{sec:rob}. We adopt four low-resource languages.
\textbf{Malaysian:} Target is {Malaysian-LLaMA-3-1B-Instruct}~\citep{malaysian_LLaMA}.
\textbf{Indonesian:} Target is {LLaMA-3-1B-Indonesian-QLoRA}~\citep{LLaMA_indo}.
\textbf{Thai:} Target is {Typhoon2-1B-Instruct}~\citep{typhoon2}.
\textbf{Cantonese:} To better match linguistic proximity, we use a Chinese instruction-tuned source {LLaMA3-Chinese-8B-Instruct}~\citep{LLaMA3_chinese_8b}, and fuse into the target model {LLaMA-3-1B-Instruct}.
Details of the data used for adaptation and evaluation benchmarks are provided in Appendix~\ref{app:ot_data} and Appendix~\ref{app:benchmarks}, respectively.


\noindent\textbf{Malaysian.}
Table~\ref{tab:malay} shows results of our method on MalayMMLU \citep{poh2024malaymmlu}.
We can observe that our method consistently achieves the best performance.
Relative to the Base Model, {Fused w/ Adaptation} gains
$+6.51$ (Humanities), $+3.31$ (Language), $+4.19$ (Others), $+3.97$ (STEM), and $+5.99$ (Social Science),
indicating successful transfer from the high-resource language.

\begin{table}[t]
\caption{Low-resource language transfer results on Indonesian benchmarks.
Accuracy (\%; higher is better).}
\centering
\small
\setlength{\tabcolsep}{4pt}
\resizebox{\columnwidth}{!}{
\begin{tabular}{lccc}
\toprule
\rowcolor{lightgray}
Benchmark & Base Model & Fused w/o Adaptation & Fused w/ Adaptation \\
\midrule
ARC (Indonesian)               & 23.4 & \textbf{24.0} & 23.7 \\
Belebele (Indonesian)          & 34.8 & 36.4 & \textbf{36.9} \\
TruthfulQA-MC (Indonesian)     & 35.9 & 36.5 & \textbf{36.6} \\
XCOPA (Indonesian)             & 59.6 & 59.8 & \textbf{60.0} \\
\midrule \rowcolor{lightblue}
\textbf{Average}
& 38.43
& 39.18
& \textbf{39.30} \\
\bottomrule
\end{tabular}
}

\label{tab:indonesian}
\end{table}

\noindent\textbf{Indonesian.}
Table~\ref{tab:indonesian} reports results on Indonesian benchmarks \citep{clark2018think,bandarkar2024belebele,lin2022truthfulqa,ponti2020xcopa}.
Fusion alone already improves most tasks, suggesting effective language capability transfer without extra training.
With post-fusion adaptation, the model attains the best scores on three of four benchmarks,
while {Fused w/o Adaptation} remains best on ARC.

\begin{table}[t]
\caption{Low-resource language transfer results on CMMLU (Cantonese) \citep{jiang2025well}.
Accuracy (\%; higher is better).}
\centering
\small
\setlength{\tabcolsep}{4pt}
\resizebox{\columnwidth}{!}{
\begin{tabular}{lccc}
\toprule
\rowcolor{lightgray}
Category & Base Model & Fused w/o Adaptation & Fused w/ Adaptation \\
\midrule
Humanities     & 25.44 & \textbf{27.72} & 27.34 \\
Social Science & 25.07 & 27.10 & \textbf{27.36} \\
STEM           & 25.22 & \textbf{26.63} & \textbf{26.63} \\
Others         & 25.84 & \textbf{29.21} & 28.99 \\
\midrule \rowcolor{lightblue}
\textbf{Average}
& 25.26
& 27.41
& \textbf{27.44} \\
\bottomrule
\end{tabular}
}
\label{tab:cantonese}
\end{table}


\textbf{Cantonese.}
Table~\ref{tab:cantonese} shows accuracy of CMMLU benchmark \citep{jiang2025well}.
{Fused w/ Adaptation} improves all categories and raises the overall average from $25.26$ to $27.44$ ($+2.18$).
The consistent gains suggest our fusion can transfer relevant knowledge despite different architectures.

\begin{table}[t]
\caption{Low-resource language transfer results on Thai benchmarks.
Score (higher is better).}
\centering
\small
\setlength{\tabcolsep}{4pt}
\resizebox{\columnwidth}{!}{
\begin{tabular}{lccc}
\toprule
\rowcolor{lightgray}
Benchmark & Base Model & Fused w/o Adaptation & Fused w/ Adaptation \\
\midrule
MMLU (Thai)  & 0.15 & 0.13 & \textbf{0.17} \\
MGSM (Thai)  & 0.50 & 0.64 & \textbf{0.72} \\
XCOPA (Thai) & 0.58 & 0.58 & \textbf{0.60} \\
\midrule \rowcolor{lightblue}
\textbf{Average} 
& 0.41 
& 0.45 
& \textbf{0.50} \\
\bottomrule
\end{tabular}
}
\label{tab:thai}
\end{table}

\noindent\textbf{Thai.}
Table~\ref{tab:thai} summarizes Thai benchmark \citep{hendrycks2020measuring,ponti2020xcopa,shi2022language} results.
Fusion substantially boosts the MGSM task even without adaptation ($0.50 \rightarrow 0.64$),
and adaptation further improves all reported benchmarks, achieving the best overall scores.
This indicates fusion provides effective transfer, while adaptation stabilizes and calibrates task performance.

\subsection{General-to-Expert Domain Transfer}
\textbf{Setup.}
We also evaluate general-domain to expert-domain transfer.
For \textbf{Finance}, we fuse {LLaMA-3-1B-TEL-A-finance}~\citep{LLaMA_fin} with
{Qwen2.5-7B}~\citep{qwen}.
For \textbf{Medical}, we fuse medical version LLaMA \citep{LLaMA3} with {LLaMA-3.2-8B}.
\begin{table}[t]
\caption{General-domain to finance-domain transfer results on financial benchmarks.
Score (higher is better).}
\centering
\small
\setlength{\tabcolsep}{4pt}
\resizebox{\columnwidth}{!}{
\begin{tabular}{lccc}
\toprule
\rowcolor{lightgray}
Benchmark & Base Model & Fused w/o Adaptation & Fused w/ Adaptation \\
\midrule
MMLU (Business Ethics)         & 0.36 & 0.37 & \textbf{0.38} \\
MMLU (Microeconomics)          & 0.34 & \textbf{0.39} & 0.37 \\
MMLU (Professional Accounting) & 0.26 & 0.27 & \textbf{0.29} \\
\midrule \rowcolor{lightblue}
\textbf{Average}
& 0.32
& 0.34
& \textbf{0.35} \\
\bottomrule
\end{tabular}
}
\label{tab:finance}
\end{table}

\noindent\textbf{Finance.}
Table~\ref{tab:finance} reports results on three finance-related MMLU subsets \citep{hendrycks2020measuring}.
Direct fusion without adaptation largely preserves the Base Model’s performance, yielding either marginal gains or near-parity across tasks, which indicates that cross-domain fusion does not introduce negative interference.
With post-fusion adaptation, the fused model consistently achieves the best performance on all three benchmarks.
These results suggest that fusion serves as an effective initialization that transfers general knowledge into the target domain.
\begin{table}[t]
\caption{General-domain to medical-domain transfer results on medical benchmarks.
Accuracy (\%; higher is better).}
\centering
\small
\setlength{\tabcolsep}{4pt}
\resizebox{\columnwidth}{!}{
\begin{tabular}{lccc}
\toprule
\rowcolor{lightgray}
Benchmark & Base Model & Fused w/o Adaptation & Fused w/ Adaptation \\
\midrule
MMLU (Anatomy)               & 49.6 & 50.0 & \textbf{51.1} \\
MMLU (Medical Genetics)      & 50.0 & 49.0 & \textbf{52.0} \\
MMLU (Professional Medicine) & 39.3 & 39.7 & \textbf{40.5} \\
\midrule \rowcolor{lightblue}
\textbf{Average}
& 46.30
& 46.23
& \textbf{47.87} \\
\bottomrule
\end{tabular}
}

\label{tab:medical}
\end{table}

\noindent\textbf{Medical.}
Table~\ref{tab:medical} shows consistent improvements from fusion on medical benchmarks \citep{hendrycks2020measuring}.
Even without adaptation, fusion yields positive gains on {anatomy} and {professional medicine}.
Moreover, {Fused w/ Adaptation} achieves the best results across all three tasks,
improving over the Base Model by $+1.5$, $+2.0$, and $+1.2$, indicating stable consolidation of transferred medical knowledge.


\subsection{Effectiveness of Cross-Architecture Merging} 
\label{sec:effect}
 \begin{figure}[t]
     \centering
     \includegraphics[width=\linewidth]{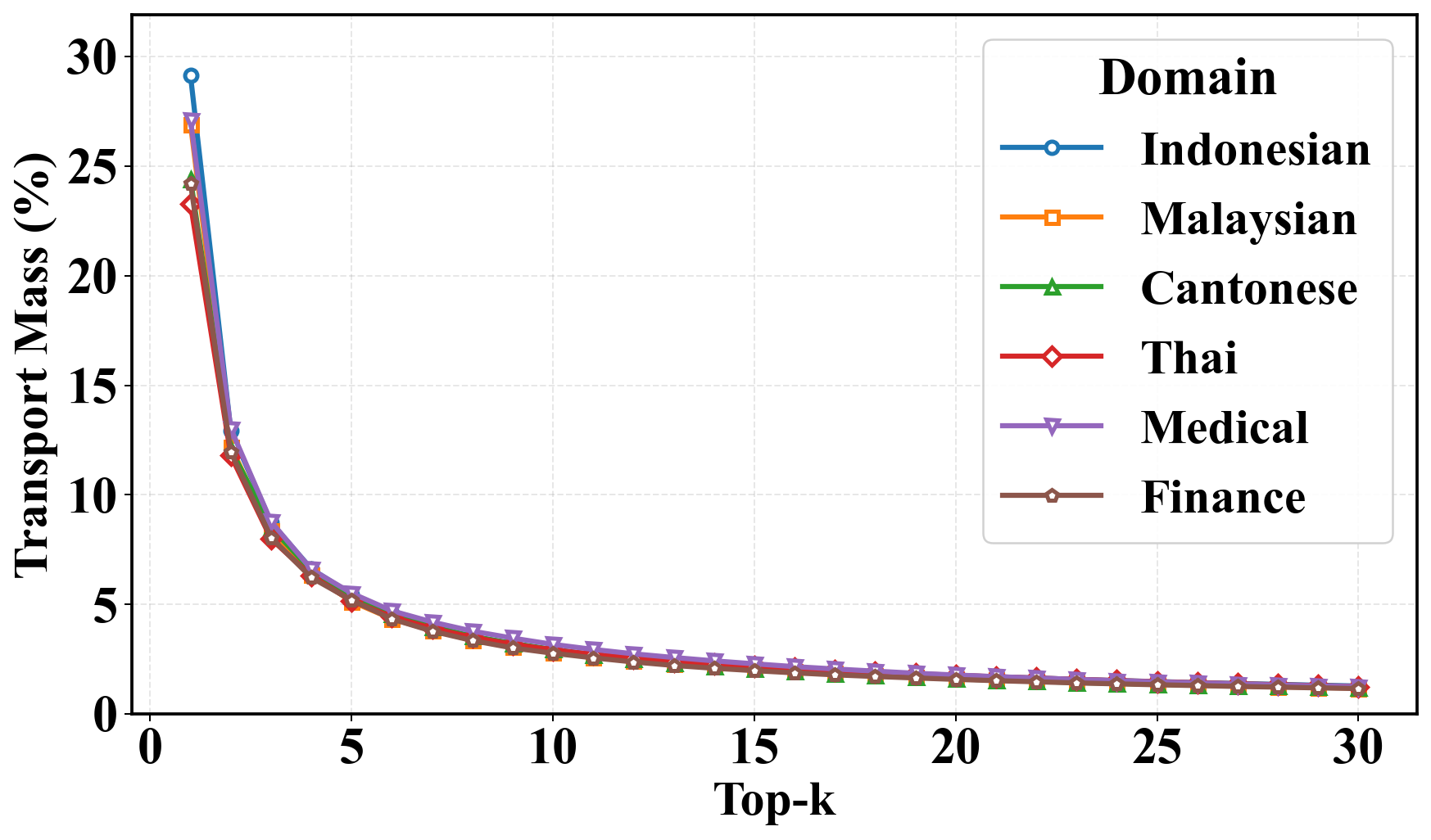}
\caption{
\textbf{Average transport mass explained} by the top-$k$ neuron correspondences, computed from the optimal transport plans and averaged over layers and modules.
}
     \label{fig:transport_mass}
     
 \end{figure}

\noindent\textbf{Evidence of cross-architecture Representation Similarity.}   
We analyze representation similarity across heterogeneous models using the feature-level optimal
transport plans $Q^{\ell m}$ learned during fusion (Eq.~\eqref{eq:feature-ot}).
Each entry $Q^{\ell m}_{ij}$ measures the amount of activation mass transported from a source
feature channel to a target feature channel, under balanced marginal constraints.
To quantify how concentrated these correspondences are, we compute the percentage of transport mass captured by the top-$k$ entries of $Q^{\ell m}$.
Specifically, for each $Q^{\ell m}$, we sort all entries in descending order and accumulate the transport mass of the top-$k$ entries, reporting this quantity as a percentage of the total transport mass.
We then average this quantity across all layer pairs and modules.
Figure~\ref{fig:transport_mass} plots the averaged transport mass.
The horizontal axis corresponds to the top-$k$ neuron correspondences, while the vertical axis shows
the average percentage of source-to-target transport mass.
Across all domains, a small number of neuron pairs explains a large fraction of the transport mass,
indicating sparse yet aligned internal representations across heterogeneous LLMs.

    
     \textbf{Performance Comparison with Merging Source Models of Different Sizes.}
We further analyze how the capacity of the source model affects cross-architecture fusion
performance.
Using MalayMMLU as a representative benchmark, we merge Malaysian-LLaMA-3-1B-Instruct with
general-purpose LLaMA models of different scales (1B, 8B, and 32B).
Table~\ref{tab:source_size} shows that increasing the source model size leads to consistent
performance improvements.
While merging with an 8B source already yields substantial gains across all categories,
scaling the source to 32B provides additional improvements. 
This indicates that larger source models offer richer transferable representations that can be
selectively injected through our neuron-level fusion mechanism.

\begin{table}[t]
\caption{Effect of source model size on MalayMMLU.
Results report accuracy (\%) of {Fused w/ Adaptation} with a fixed 1B target model.
Higher is better.}
\centering
\small
\setlength{\tabcolsep}{4pt}
\resizebox{\columnwidth}{!}{
\begin{tabular}{lccccc}
\toprule
\rowcolor{lightgray}
Source Size & Humanities & Language & Others & STEM & Social Science \\
\midrule
1B   & 47.65 & 41.76 & 47.97 & 46.05 & 46.00 \\
8B   & 48.81 & 41.68 & 48.69 & 46.58 & 46.91 \\
32B  & \textbf{49.17} & \textbf{42.27} & \textbf{49.17} & \textbf{46.75} & \textbf{47.28} \\
\bottomrule
\end{tabular}
}
\label{tab:source_size}
\end{table}

\textbf{Compared with SFT Training and Distillation.}
As illustrated in Figure~\ref{fig:cross_domain_comparison}, our method consistently outperforms both SFT and distillation~\citep{distill} under normalized evaluation.
Notably, the adapted fusion achieves the highest relative performance in every domain, whereas SFT and distillation show domain-dependent variability.
These results suggest that effective cross-domain transfer requires not only shared representations, but also post-fusion adaptation to properly align domain-specific features. Details can be found in Appendix \ref{app:sft_dis}.

\subsection{Robustness of Cross-Architecture Merging}
\label{sec:rob}
    

\noindent\textbf{Sensitivity to Source Model Backbone.}
Figure~\ref{fig:sen_sec} studies the sensitivity of our approach to the choice of
high-resource source backbone. We compare our method (Fused w/ Adaptation) using two different
source models, LLaMA3-8B and Qwen2.5-7B, while keeping the same low-resource target model.
Across both Malay and Indonesian benchmarks, we observe consistent improvements over the
Base Model.
This suggests that our method transfers task-relevant features rather than
overfitting to a specific source architecture, and remains robust across heterogeneous
source models.

\begin{figure}[t]
    \centering
    \includegraphics[width=\linewidth]{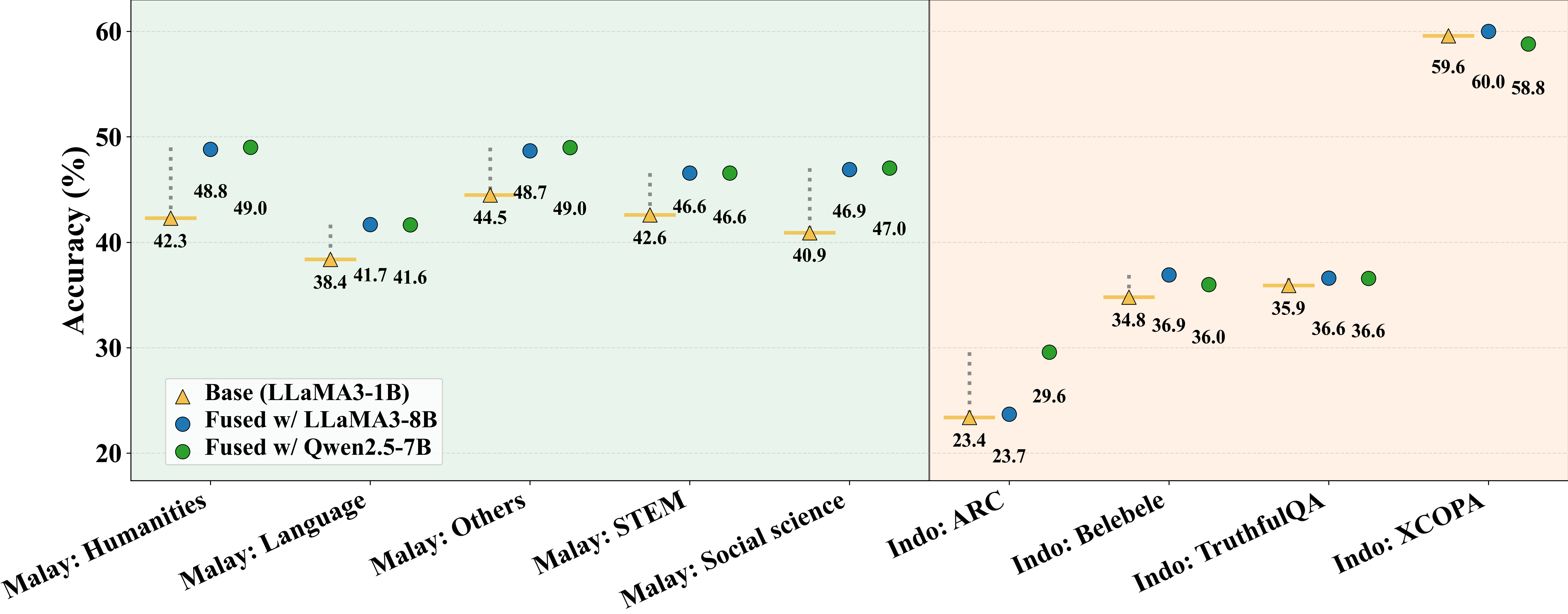}
\caption{
\textbf{Sensitivity to the choice of source backbone.}
On the Malay and Indonesian benchmarks, our method consistently outperforms the Base Model when using either LLaMA3-8B or Qwen2.5-7B as the source model.
}
    \label{fig:sen_sec}
\end{figure}
    
    \textbf{Sensitivity to Fusion Coefficient $\alpha$.}
\label{subsec:alpha_sensitivity}
Figure~\ref{fig:alpha_sensitivity} analyzes the sensitivity of  performance to the fusion coefficient $\alpha$, which controls the contribution of the source model during parameter merging.
Across all categories, performance exhibits a consistent trend as $\alpha$ varies, indicating that our method is not overly sensitive to precise hyperparameter tuning.
Moderate values of $\alpha$ (around $0.05$--$0.15$) yield the strongest overall performance.
However, when $\alpha$ is too small, the influence of the source model is limited, resulting in conservative improvements.
Conversely, overly large $\alpha$ values lead to performance degradation in several categories, suggesting that excessive source injection may introduce mismatched or noisy information under architectural differences.


\begin{figure}[t]
    \centering
    \includegraphics[width=\linewidth]{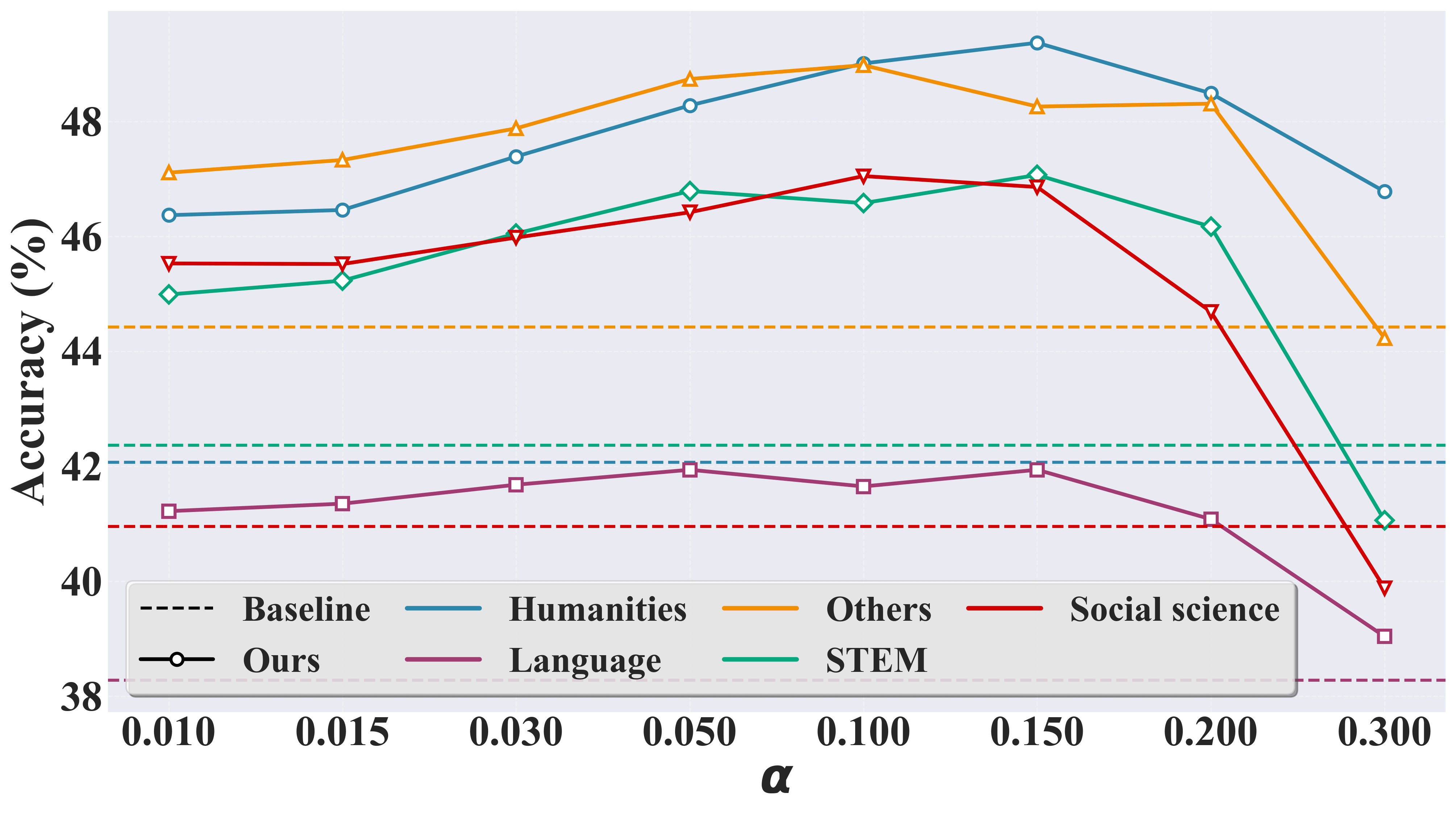}
\caption{
\textbf{Sensitivity analysis of the fusion coefficient $\alpha$} for our method (Fused w/ Adaptation) on MalayMMLU.
We report category-wise accuracy (\%; higher is better); dashed lines denote the corresponding base-model performance.
}
    \label{fig:alpha_sensitivity}
\end{figure}
    \begin{figure}[]
        \centering
        \includegraphics[width=\linewidth]{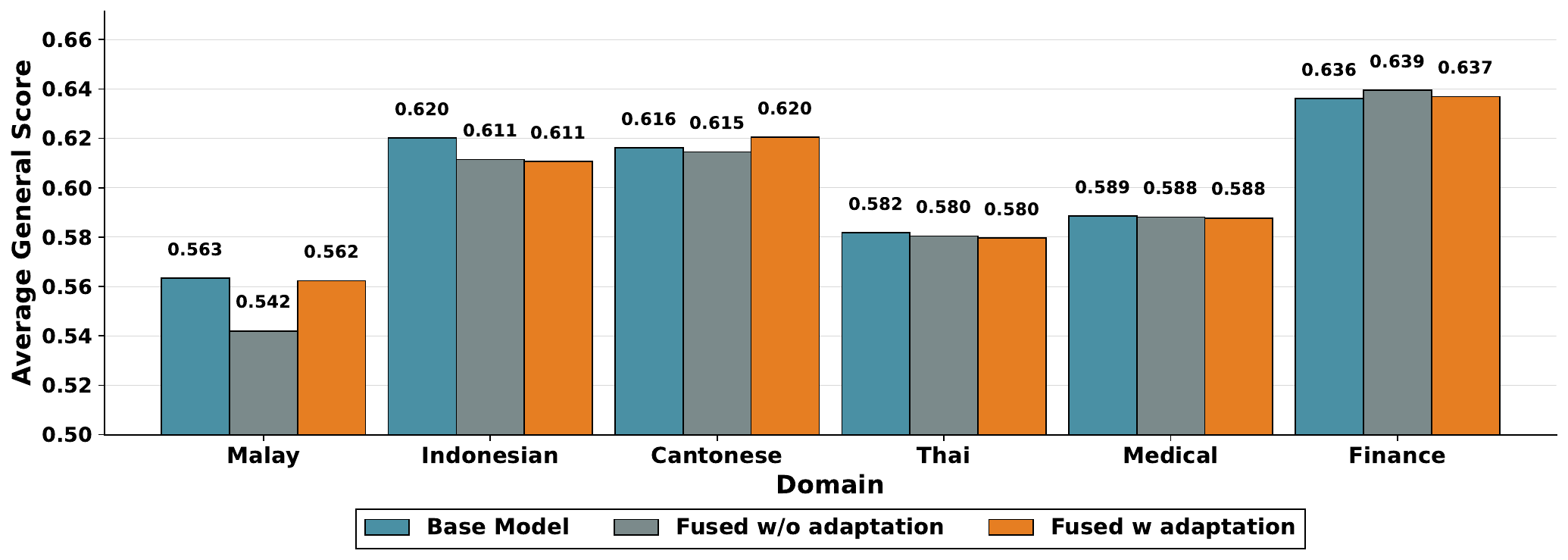}
\caption{
\textbf{Impact of cross-architecture fusion on general abilities across domains.}
Bars show the average general score (higher is better) for the Base Model and fused variants, with and without post-fusion adaptation.
}
        \label{fig:general_ability}
    \end{figure}
\noindent\textbf{Impact on General Abilities.}
Figure~\ref{fig:general_ability} evaluates how cross-architecture fusion affects general capabilities. {Details of the general benchmarks used in this evaluation are provided in Appendix~\ref{app:benchmarks}.}
Overall, fusion does not degrade general abilities, and preserves performance close to the Base Model.
For Malay and Cantonese, fusion with adaptation slightly improves the average general score, indicating that transferred knowledge can be integrated without harming generalization.
For Indonesian, Thai, and Medical settings, fused models achieve performance comparable to the Base Model, with differences within a narrow margin.
These results suggest that our transport-based merging selectively injects transferable knowledge while largely maintaining the target model’s original general capabilities.



\section{Conclusion}

We proposed an OT-based framework for cross-architecture model merging that aligns heterogeneous models in activation space and converts the resulting correspondences into direct weight-space fusion.
Across low-resource languages and expert domains, our approach consistently improves target models using only a small calibration set, and residual-frozen adaptation further strengthens performance while preserving general capabilities. Overall, transport-guided cross-architecture merging provides a practical and principled alternative to distillation when architectures differ.
\newpage
\section*{Impact Statement}
This work improves knowledge transfer from high-resource models to smaller low-resource or domain-specific models with heterogeneous architectures.
We use activation-based optimal transport to estimate cross-architecture correspondences from a small input set for direct parameter fusion, optionally followed by lightweight adaptation.
Positive impacts include reducing data/compute requirements for low-resource languages and specialized domains.
Negative impacts include propagating biases, factual errors, or unsafe behaviors from the source model, and overstating robustness if evaluation is insufficient.
We therefore emphasize rigorous evaluation, transparent reporting, and appropriate safeguards when applying the method in practice.


\bibliography{example_paper}
\bibliographystyle{icml2025}

\newpage
\appendix

\section{Optimal Transport Formulation Details}
\label{app:ot}

This appendix provides the full formulations underlying the transport plans used for
model merging. We present both the neuron-level transport within each layer pair and the global
layer-level mixing transport across the full depth.

\subsection{Neuron-Level Transport Polytope and Objective}
\label{app:ot-inner}

To manage computation and filter out noise, we apply top-$k$ selection strategies at both the neuron and transport matrix levels.

\paragraph{Transport Matrix Sparsification.}

For a target layer $\ell$ with $n_\ell$ feature channels and a source layer $m$ with $n'_m$ feature channels,
we define a cost matrix $C^{\ell m}_{\mathrm{inner}}\in\mathbb{R}^{n_\ell\times n'_m}$, where each entry measures
the dissimilarity between activation channels (Eq.~\ref{eq:inner-cost}).

We compute a soft relationship matrix $Q^{\ell m}\in\mathbb{R}^{n_\ell\times n'_m}$ by solving an entropically
regularized optimal transport problem:
\begin{equation}
Q^{\ell m}
=
\arg\min_{Q\in \mathcal{T}(n_\ell,n'_m)}
\ \langle C^{\ell m}_{\mathrm{inner}}, Q\rangle
-\varepsilon H(Q),
\label{eq:app-inner-ot}
\end{equation}
where $\varepsilon>0$, $\langle A,B\rangle=\sum_{i,j}A_{ij}B_{ij}$, and the entropy term is
\begin{equation}
H(Q)= -\sum_{i=1}^{n_\ell}\sum_{j=1}^{n'_m} Q_{ij}\big(\log Q_{ij}-1\big).
\label{eq:app-entropy}
\end{equation}
The transportation polytope enforces balanced marginals:
\begin{equation}
\mathcal{T}(n_\ell,n'_m)
=
\Big\{
Q\in \mathbb{R}_+^{n_\ell\times n'_m}:
Q\mathbf{1}=a,\ 
Q^\top\mathbf{1}=b
\Big\},
\label{eq:app-inner-poly}
\end{equation}
where we use uniform marginals
\begin{equation}
a=\tfrac{1}{n_\ell}\mathbf{1}\in\mathbb{R}^{n_\ell},
\qquad
b=\tfrac{1}{n'_m}\mathbf{1}\in\mathbb{R}^{n'_m}.
\label{eq:app-inner-marginals}
\end{equation}
Uniform marginals ensure that each target channel distributes equal total mass and each source channel
receives equal total mass, preventing degenerate, one-sided matchings.

\subsection{Layer-Level Transport for Global Mixing}
\label{app:ot-outer}

Given neuron-level solutions $\{Q^{\ell m}\}$, we form the layer-level cost matrix
$C_{\mathrm{layer}}\in\mathbb{R}^{L\times M}$ by
\begin{equation}
C_{\mathrm{layer}}[\ell,m]
=
\langle C^{\ell m}_{\mathrm{inner}}, Q^{\ell m}\rangle.
\label{eq:app-layer-cost}
\end{equation}
We then compute the global layer mixing matrix $P\in\mathbb{R}^{L\times M}$ via
\begin{equation}
P
=
\arg\min_{P\in \mathcal{T}(L,M)}
\ \langle C_{\mathrm{layer}}, P\rangle
-\eta H(P),
\label{eq:app-outer-ot}
\end{equation}
where $\eta>0$ and
\begin{equation}
\mathcal{T}(L,M)
=
\Big\{
P\in \mathbb{R}_+^{L\times M}:
P\mathbf{1}=\bar a,\ 
P^\top\mathbf{1}=\bar b
\Big\},
\label{eq:app-outer-poly}
\end{equation}
with uniform marginals
\begin{equation}
\bar a=\tfrac{1}{L}\mathbf{1}\in\mathbb{R}^{L},
\qquad
\bar b=\tfrac{1}{M}\mathbf{1}\in\mathbb{R}^{M}.
\label{eq:app-outer-marginals}
\end{equation}
These constraints enforce global balance: each target layer allocates its mass across source layers,
and each source layer receives comparable total mass, improving utilization during merging.

\subsection{Sinkhorn Solver and Stabilization}
\label{app:sinkhorn}

We solve the entropically regularized OT problems in
Eqs.~\eqref{eq:app-inner-ot} and \eqref{eq:app-outer-ot} using Sinkhorn iterations.
Below we summarize the scaling-form derivation and the iterative updates.

\subsubsection{Scaling Form}
\label{app:sinkhorn-scaling}

Consider the generic entropic OT problem
\begin{equation}
\min_{Q\in\mathbb{R}_+^{n\times m}}
\ \langle C,Q\rangle - \varepsilon H(Q)
\quad \text{s.t.}\quad
Q\mathbf{1}=a,\ \ Q^\top\mathbf{1}=b.
\label{eq:app-generic-ot}
\end{equation}
Define the Gibbs kernel
\begin{equation}
K=\exp(-C/\varepsilon).
\label{eq:app-gibbs}
\end{equation}
The optimal solution has the form
\begin{equation}
Q^\star=\mathrm{diag}(u)\,K\,\mathrm{diag}(v),
\label{eq:app-scaling}
\end{equation}
where $u\in\mathbb{R}_+^{n}$ and $v\in\mathbb{R}_+^{m}$ are scaling vectors chosen to match the
marginals.

\subsubsection{Sinkhorn Iterations}
\label{app:sinkhorn-iters}

Given $K$, Sinkhorn iterations alternately normalize rows and columns to satisfy the marginal
constraints:
\begin{equation}u \leftarrow a ./ (K v),
\qquad
v \leftarrow b ./ (K^\top u),
\label{eq:app-sinkhorn-updates}
\end{equation}
where $./$ denotes element-wise division.
After convergence, $Q=\mathrm{diag}(u)K\mathrm{diag}(v)$ satisfies $Q\mathbf{1}\approx a$ and
$Q^\top\mathbf{1}\approx b$.

For the neuron-level problem, we use $C=C^{\ell m}_{\mathrm{inner}}$, $a=\tfrac{1}{n_\ell}\mathbf{1}$,
$b=\tfrac{1}{n'_m}\mathbf{1}$.
For the layer-level problem, we use $C=C_{\mathrm{layer}}$, $\bar a=\tfrac{1}{L}\mathbf{1}$,
$\bar b=\tfrac{1}{M}\mathbf{1}$.

\subsubsection{Practical Stabilization}
\label{app:sinkhorn-stable}

In practice, the kernel $K=\exp(-C/\varepsilon)$ may suffer from numerical underflow when costs are
large or $\varepsilon$ is small.
We therefore optionally employ standard stabilization strategies:
(i) log-domain Sinkhorn updates, and/or (ii) periodic rescaling of $(u,v)$.
We stop iterations when the maximum marginal violation
$\max\{\|Q\mathbf{1}-a\|_\infty,\ \|Q^\top\mathbf{1}-b\|_\infty\}$ falls below a tolerance.

\subsubsection{Hyperparameter Settings}
\label{app:sinkhorn-hyperparams}

We solve the entropically regularized OT problems using the Sinkhorn algorithm with the following specific hyperparameters:

\paragraph{Inner OT (Feature Alignment).}
For computing $Q^{\ell m}$ (Eq.~\eqref{eq:app-inner-ot}):
\begin{itemize}
    \item Regularization ($\varepsilon$): We use $\varepsilon=0.1$ for standard text datasets (e.g., Malay, Cantonese) and a tighter $\varepsilon=0.03$ for the GSM8K math reasoning task.
    \item Iterations: We use a memory-efficient streaming Sinkhorn solver with fixed $200$ iterations.
    \item Tolerance: Convergence tolerance is set to $10^{-6}$.
\end{itemize}

\paragraph{Outer OT (Layer Mixing).}
For computing $P$ (Eq.~\eqref{eq:app-outer-ot}):
\begin{itemize}
    \item Regularization ($\eta$): We set the default $\eta=0.1$. We employ an adaptive scaling mechanism where $\eta$ is increased if the maximum value of the cost matrix $C_{\mathrm{layer}}$ exceeds 1000, ensuring numerical stability.
    \item Iterations: We allow up to $1000$ iterations.
    \item Tolerance: We use a strict convergence tolerance of $10^{-9}$ with a numerical stability epsilon of $10^{-12}$.
\end{itemize}


\subsection{Proofs for Representation-Space Interpretation}
\label{app:transport-proof}

\begin{proof}[Proof of Theorem~\ref{thm:rep-transfer}]
By definition,
\[
\widetilde{W}_{B\to A}^{\ell m} h_A
=
(\Phi^{\ell m}_{\mathrm{out}} W_B^m \Phi^{\ell m}_{\mathrm{in}})h_A
=
\Phi^{\ell m}_{\mathrm{out}}
\big(
W_B^m (\Phi^{\ell m}_{\mathrm{in}} h_A)
\big),
\]
which is exactly Eq. \eqref{eq:rep-transfer}.
\end{proof}

\begin{proof}[Proof of Corollary~\ref{cor:invertible-alignment}]
Assume $\Phi^{\ell m}_{\mathrm{in}}$ and $\Phi^{\ell m}_{\mathrm{out}}$ are invertible.
Let $U:\mathbb{R}^{d_{A,\mathrm{in}}^\ell}\rightarrow \mathbb{R}^{d_{A,\mathrm{out}}^\ell}$ be any linear map such that
\[
U h_A = \Phi^{\ell m}_{\mathrm{out}}
\Big(
W_B^m\big(\Phi^{\ell m}_{\mathrm{in}} h_A\big)
\Big)
\quad \text{for all } h_A.
\]
Then
\[
U
=
\Phi^{\ell m}_{\mathrm{out}} W_B^m \Phi^{\ell m}_{\mathrm{in}}
=
\widetilde{W}_{B\to A}^{\ell m},
\]
establishing uniqueness.
\end{proof}

\subsection{Neuron Selection for Replacement.}
\label{app:top-k}
For the top-$k$ neuron replacement strategy, we set the default number of neurons to $k=128$.
The selection rule matches the main text: we run the target model on the calibration set $\mathcal{D}$
(used for transport estimation) and record neuron activations via forward hooks.
For each neuron $j$ in layer $\ell$, we compute its activation strength as the mean absolute
activation over the $T$ samples,
\[
 s_j = \frac{1}{T}\sum_{t=1}^{T} \bigl| h_{t,j} \bigr|,
\]
where $h_{t,j}$ denotes the activation of neuron $j$ on sample $t$.
We then select the indices of the $k$ neurons with the highest $s_j$.

\subsection{Computational Complexity}
\label{app:complex}
We analyze the computational cost of the proposed optimal transport (OT)–based alignment in terms of
feature- and layer-level transport.

\textbf{Feature-level OT.}
For each target layer $\ell$ and source layer $m$, we solve an entropically regularized OT problem
between $n_\ell$ target features and $n'_m$ source features using Sinkhorn iterations.
Each Sinkhorn update involves matrix–vector multiplications with the kernel
$K^{\ell m}\in\mathbb{R}^{n_\ell\times n'_m}$, incurring
$\mathcal{O}(n_\ell n'_m)$ time per iteration.
With $I_{\mathrm{in}}$ Sinkhorn iterations, the total cost of one inner OT problem is
$\mathcal{O}(I_{\mathrm{in}}\, n_\ell n'_m)$.

Across all layer pairs, the total feature-level OT cost is
\[
\mathcal{O}\!\left(
I_{\mathrm{in}}
\sum_{\ell=1}^{L}
\sum_{m=1}^{M}
n_\ell n'_m
\right).
\]

\textbf{Layer-level OT.}
The layer-level transport problem operates on a cost matrix
$C_{\mathrm{layer}}\in\mathbb{R}^{L\times M}$.
Each Sinkhorn iteration costs $\mathcal{O}(LM)$, and with $I_{\mathrm{out}}$ iterations,
the total cost is $\mathcal{O}(I_{\mathrm{out}}\, LM)$, which is negligible compared to
feature-level OT when $n_\ell, n'_m \gg L,M$.

\textbf{Overall complexity and practicality.}
In practice, $I_{\mathrm{in}}$ and $I_{\mathrm{out}}$ denote the numbers of Sinkhorn iterations for the
inner and outer OT problems (not to be confused with $T$, the number of samples used for activation
extraction in Eq.~\eqref{eq:acts}).
We fix $I_{\mathrm{in}}$ and $I_{\mathrm{out}}$ to small constants (e.g., 200 and 1000), and feature
dimensions are moderate for selected layers and projection modules.
Moreover, OT estimation is performed \emph{once} using a small dataset and does not
introduce overhead during inference.
As a result, the OT-based alignment adds a one-time preprocessing cost and remains practical for
cross-architecture fusion.

\section{Experimental Setup}
\label{sec:experimental_setup}

In this section, we detail the datasets used for training our models and the comprehensive benchmarks employed to evaluate their performance across diverse domains and languages.

\subsection{Stimulus and Training Datasets}
\label{app:ot_data}

We use stimulus datasets to extract representative activations for correspondence estimation,
and training datasets for task-specific adaptation when required.
Both are tailored to each domain and language.
For each dataset, we randomly sample 2000 examples.


\subsubsection{Domain-Specific}
\begin{itemize}
    \item \textbf{Medical LLaMA3:} We employ a specialized medical dataset (medical\_LLaMA3) \citep{medical_LLaMA3} containing high-quality biomedical literature, clinical case reports, and medical guidelines to instill domain-specific medical knowledge.
    \item \textbf{Finance:} To cover the financial domain, we utilize a dedicated finance dataset (finance) \citep{finance_instruct} comprising financial news, reports, and economic analysis texts, enabling the model to grasp complex economic concepts and terminology.
\end{itemize}

\subsubsection{Multilingual Languages}
\begin{itemize}
    \item \textbf{Thai:} We utilize fineweb\_thai \citep{penedo2024fineweb}, a subset of the FineWeb corpus focused on high-quality Thai web text, to improve language modeling performance in Thai.
    \item \textbf{Indonesian:} We incorporate indonesian\_conversation, a dataset designed to capture natural dialogue and conversational nuances in the Indonesian language.
    \item \textbf{Malay:} For the Malay language, we use malaysian\_sft \citep{malaysian_sft}, a supervised adaptation dataset tailored to improve instruction following in Malaysian contexts.
    \item \textbf{Cantonese:} We include a dedicated cantonese dataset \citep{cantonese_dialogue} to address the linguistic specificities and colloquialisms of this regional language.
\end{itemize}

\subsection{Evaluation Benchmarks}
\label{app:benchmarks}

We evaluate our model on a diverse set of benchmarks, categorized by language and domain to provide a holistic view of performance.

\subsubsection{Malay Benchmarks}
\begin{itemize}
    \item \textbf{MMLU (Malay):} To assess general world knowledge in Malay, we utilize the Malay version of the Massive Multitask Language Understanding (MMLU) benchmark~\citep{poh2024malaymmlu}. This task evaluates the model's accuracy in a zero-shot setting across a wide range of subjects—including STEM, humanities, and social sciences—adapted to the Malay language to test cross-lingual knowledge transfer.
\end{itemize}

\subsubsection{Indonesian Benchmarks}
\begin{itemize}
    \item \textbf{ARC (Indonesian):} The AI2 Reasoning Challenge (ARC)~\citep{clark2018think} consists of grade-school science questions designed to test complex reasoning and scientific knowledge. We utilize the Indonesian version to evaluate the model's reasoning capabilities in a low-resource language context.
    
    \item \textbf{Belebele (Indonesian):} Belebele~\citep{bandarkar2024belebele} is a multilingual machine reading comprehension (MRC) dataset. Based on the FLORES-200 passages, it evaluates the model's ability to answer multiple-choice questions given a specific context. We report results on the Indonesian (\texttt{ind\_Latn}) subset.
    
    \item \textbf{TruthfulQA-MC (Indonesian):} TruthfulQA~\citep{lin2022truthfulqa} evaluates the model's truthfulness and its tendency to generate imitative falsehoods. We use the Indonesian translated version and report the average accuracy across the single-true (MC1) and multi-true (MC2) multiple-choice tasks.
    
    \item \textbf{XCOPA (Indonesian):} The Cross-lingual Choice of Plausible Alternatives (XCOPA)~\citep{ponti2020xcopa} assesses causal commonsense reasoning. The task requires the model to identify the correct cause or effect given a premise. We evaluate performance on the Indonesian subset (\texttt{xcopa\_id}).
\end{itemize}

\subsubsection{Cantonese Benchmarks}
\begin{itemize}
    \item \textbf{CMMLU (Cantonese):} We evaluate the model's proficiency in the Cantonese (Yue) dialect using the CMMLU dataset from the Yue-Benchmark suite~\citep{jiang2025well}. This benchmark consists of multiple-choice questions covering diverse disciplines such as history, physics, and culture. It is specifically designed to test the model's ability to understand and reason with Cantonese-specific vocabulary, grammar, and colloquialisms.
\end{itemize}

\subsubsection{Thai Benchmarks}
\begin{itemize}
    \item \textbf{MMLU (Thai):} We assess general world knowledge in Thai from the MMLU benchmark~\citep{hendrycks2020measuring} (adapted as \texttt{mmlu\_prox\_lite\_th\_other}). This subset comprises diverse multiple-choice questions across various uncategorized domains, evaluating the model's breadth of knowledge beyond specific academic disciplines in the Thai language.
    
    \item \textbf{XCOPA (Thai):} To evaluate causal reasoning in Thai, we utilize the Thai subset (\texttt{xcopa\_th}) of the XCOPA benchmark~\citep{ponti2020xcopa}. Similar to the Indonesian task, the model must select the most plausible alternative between cause and effect based on a premise.

    \item \textbf{MGSM (Thai):} The Multilingual Grade School Math (MGSM) benchmark~\citep{shi2022language} evaluates arithmetic reasoning capabilities. We employ the Chain-of-Thought (CoT) prompting setting on the Thai subset (\texttt{mgsm\_cot}) to assess the model's ability to perform multi-step mathematical reasoning in Thai.
\end{itemize}

\subsubsection{Finance Benchmarks}
To assess the model's capabilities in the finance and business domains, we utilize three relevant subsets from the MMLU benchmark~\citep{hendrycks2020measuring}:

\begin{itemize}
    \item \textbf{MMLU (Business Ethics):} This task (\texttt{global\_mmlu\_full\_en\_business\_ethics}) evaluates the model's ability to apply ethical principles in business scenarios, covering topics such as corporate governance, moral reasoning, and professional standards.

    \item \textbf{MMLU (Microeconomics):} This subset (\texttt{global\_mmlu\_full\_en\_\allowbreak high\_school\_\allowbreak microeconomics}) covers fundamental economic concepts including supply and demand, market structures, and consumer behavior, reflecting a high school level understanding of microeconomic theory.

    \item \textbf{MMLU (Professional Accounting):} Designed to test advanced accounting knowledge, this task (\texttt{global\_mmlu\_full\_en\_professional\_ \allowbreak  accounting}) includes questions on financial accounting, reporting standards, and auditing, corresponding to professional certification levels.
\end{itemize}

\subsubsection{Medical Benchmarks}
We evaluate domain-specific medical knowledge using three subsets from the Massive Multitask Language Understanding (MMLU) benchmark~\citep{hendrycks2020measuring}:

\begin{itemize}
    \item \textbf{MMLU (Anatomy):} This task tests the model's knowledge of human anatomy through multiple-choice questions covering various body systems and structures, derived from academic and professional sources.
    
    \item \textbf{MMLU (Medical Genetics):} This subset evaluates the model's understanding of genetic principles, hereditary diseases, and clinical genetics, requiring specialized medical knowledge to answer correctly.
    
    \item \textbf{MMLU (Professional Medicine):} This task assesses the model's ability to apply medical knowledge in professional contexts. It includes questions typical of medical board examinations, covering diagnosis, treatment, and clinical decision-making.
\end{itemize}

 \subsection{General Capabilities}
\label{ssec:general_capabilities}

To assess the model's fundamental reasoning and commonsense abilities independent of specific domains or languages, we employ the following benchmarks:

\begin{itemize}
    \item \textbf{ARC-Easy \citep{clark2018think}:} The AI2 Reasoning Challenge (Easy Set) consists of grade-school science questions. It is designed to test the model's ability to answer questions that require basic commonsense reasoning and world knowledge.
    
    \item \textbf{CommonsenseQA \citep{talmor2019commonsenseqa}:} This dataset evaluates commonsense reasoning through multiple-choice questions that require prior knowledge to distinguish between plausible answers, challenging the model's understanding of semantic relationships.
    
    \item \textbf{PIQA \citep{bisk2020piqa}:} The Physical Interaction QA (PIQA) benchmark focuses on physical commonsense reasoning. It presents the model with everyday situations and asks it to predict the most plausible physical outcome or interaction.
    
    \item \textbf{Social IQA \citep{sap2019socialiqa}:} To evaluate social intelligence, we use Social IQA, which tests the model's ability to reason about social interactions, including understanding motivations, emotional reactions, and likely next steps in social contexts.
    
    \item \textbf{WinoGrande \citep{sakaguchi2021winogrande}:} This benchmark is a large-scale dataset for commonsense reasoning, formulated as fill-in-the-blank problems. It is designed to be more robust against annotation artifacts than the original Winograd Schema Challenge, requiring deep understanding to resolve ambiguous pronouns.
\end{itemize}

\section{Details about Experiments}
\label{app:exp}

\subsection{Visualization of Feature-Level Transport Maps}
\label{app:vis_trans}

We provide qualitative visualizations of feature-level optimal transport maps to further
illustrate cross-architecture representation similarity across heterogeneous architectures.
Figure~\ref{fig:cantonese_trans} and Figure~\ref{fig:malay_trans} visualize representative
transport plans $Q^{\ell m}$ computed between target and source layers.

Each entry of $Q^{\ell m}$ encodes the amount of transport mass between a target neuron (horizontal
axis) and a source neuron (vertical axis), where brighter values indicate stronger cross-architecture
feature similarity.
Across both language settings, the transport mass is highly non-uniform and concentrates on sparse,
localized regions rather than being evenly distributed.
Only a small subset of neuron pairs exhibits strong transport values, while the majority of entries
remain close to zero.

These sparse yet structured patterns indicate that heterogeneous language models share
well-aligned internal representations at the neuron level, rather than exhibiting random or
diffuse correspondence.
Such visual evidence complements our quantitative analysis and supports the design choice of
top-$k$ neuron replacement, as the most informative cross-architecture alignments are concentrated in a
small number of highly similar neuron pairs.

\begin{figure}[!t]
    \centering
    \includegraphics[width=0.8\linewidth]{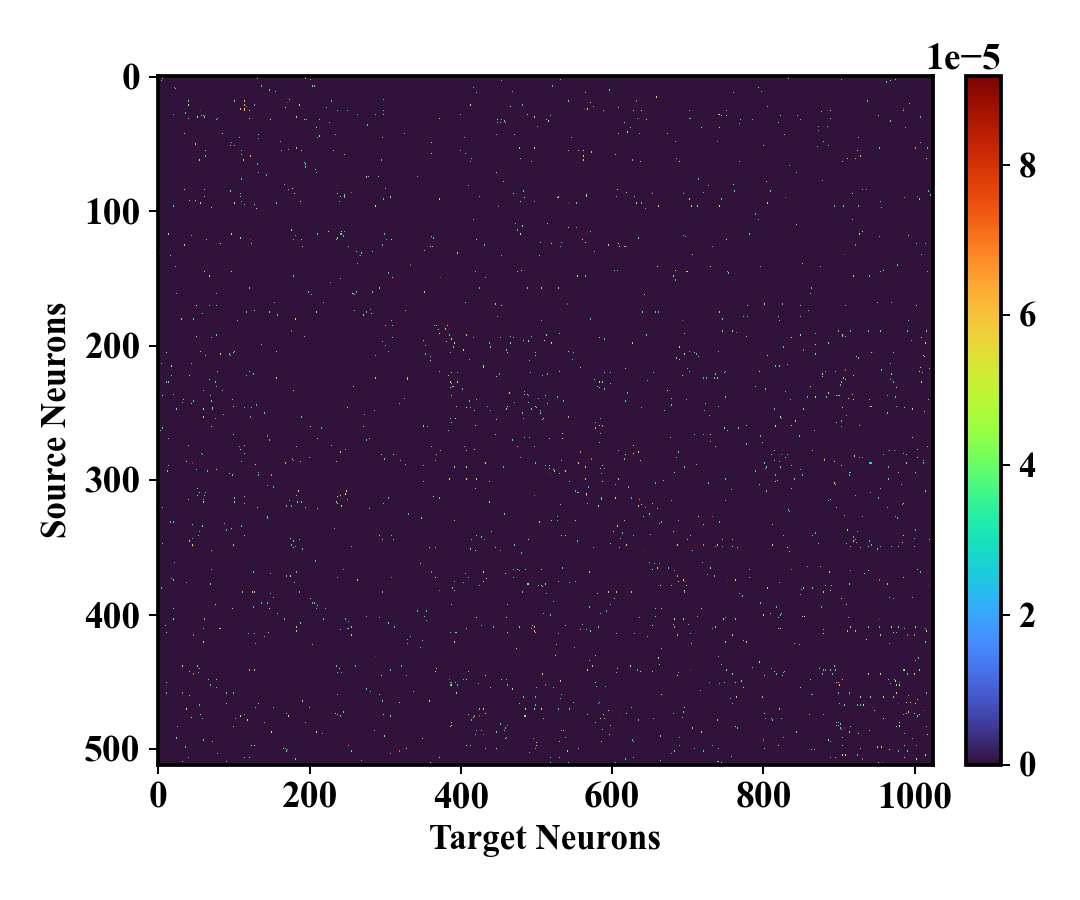}
    \caption{
    Feature-level optimal transport map between a LLaMA-3-1B model and a Chinese
    LLaMA-8B model at layer 5 (\texttt{K} projection).
    Brighter values indicate stronger neuron-level alignment.
    }
    \label{fig:cantonese_trans}
\end{figure}

\begin{figure}[!t]
    \centering
    \includegraphics[width=0.8\linewidth]{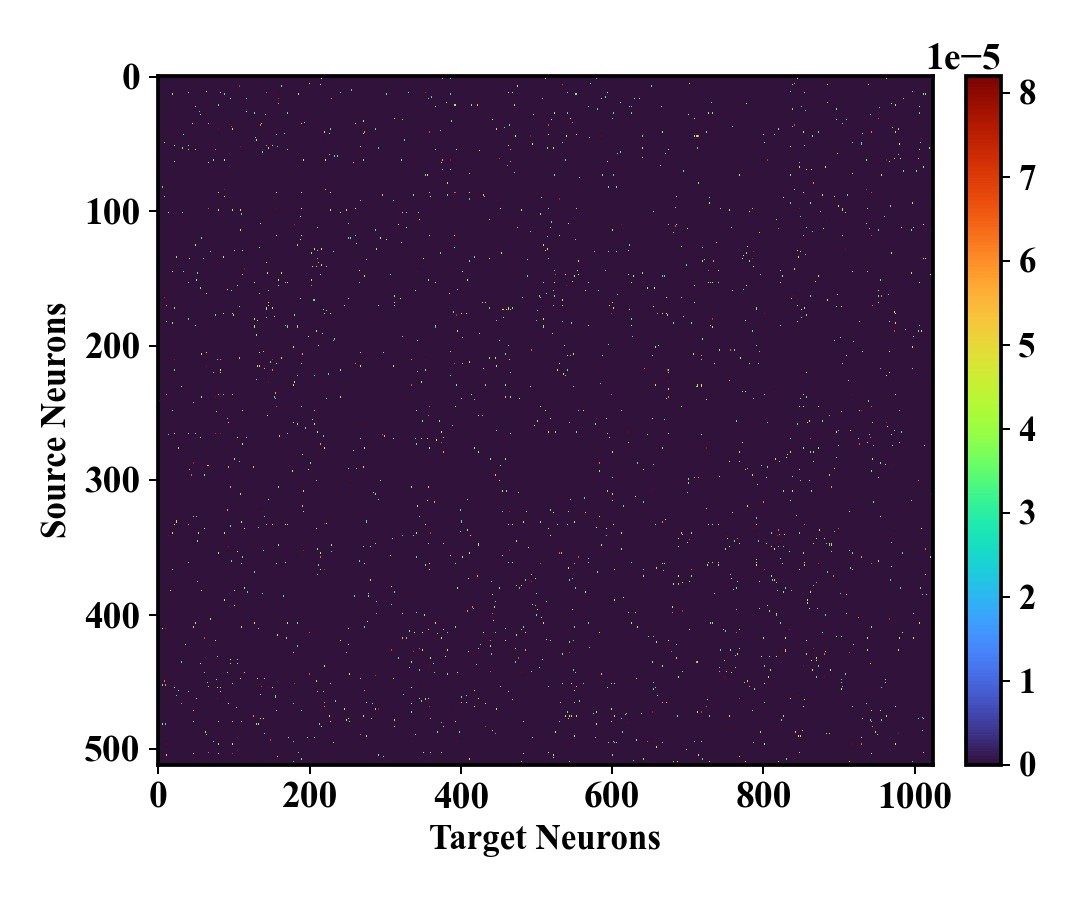}
    \caption{
    Feature-level optimal transport map between a Malaysian-version LLaMA-3-1B model and a LLaMA-8B model
    at layer 0 (\texttt{K} projection).
    Similar to Cantonese, transport mass concentrates on sparse, high-similarity neuron pairs.
    }
    \label{fig:malay_trans}
\end{figure}


\subsection{Details about Comparison with SFT training and Distillation.}
\label{app:sft_dis}

\paragraph{Evaluation Protocol and Normalization.}
To enable a fair comparison across domains with heterogeneous score ranges, we normalize results on a per-domain basis.
Specifically, let $x_{d,m}$ denote the raw performance of method $m$ on domain $d$.
We apply two complementary normalization schemes.

\textbf{Z-score normalization.}
For each domain $d$, we compute
\[
z_{d,m} = \frac{x_{d,m} - \mu_d}{\sigma_d},
\]
where $\mu_d$ and $\sigma_d$ are the mean and standard deviation of scores across all methods for domain $d$.
This normalization highlights {relative performance} within each domain and is used to visualize method-wise trajectories and average trends.
Z-scores are shown in the top row (domain-specific trajectories) and the bottom-right panel (average across domains).

\textbf{Min--max scaling.}
We additionally apply min--max normalization per domain,
\[
\hat{x}_{d,m} = \frac{x_{d,m} - \min_d}{\max_d - \min_d},
\]
where $\min_d$ and $\max_d$ denote the minimum and maximum scores across methods for domain $d$.
This scaling preserves the {relative shape} of improvements within each domain and is used in the bottom-left panel to compare improvement patterns across domains.

\paragraph{Figure Interpretation.}
The top row of Figure~\ref{fig:cross_domain_comparison} shows z-score–normalized performance trajectories for representative domains (Cantonese, Malay, and Thai), illustrating how different training strategies compare within each domain.
The bottom-left panel visualizes min--max–scaled scores across all domains, highlighting the relative improvement patterns of each method.
The bottom-right panel reports the average z-score across domains, summarizing overall cross-domain performance.

Together, these normalized views demonstrate that our adapted merging method consistently achieves the strongest relative performance across domains, while SFT and distillation exhibit more domain-dependent variability.

\section{Additional Discussion}
\label{app:discussion}

\noindent\textbf{Q1: How sensitive is the method to the choice of the calibration set $\mathcal{D}$?}
Our framework estimates transport plans $\{Q^{\ell m}\}$ and $P$ from activation statistics, and therefore relies on $\mathcal{D}$ to elicit representative behaviors of the target task or domain.
If $\mathcal{D}$ is severely mismatched, the resulting correspondences may become noisy and less informative.
To reduce this sensitivity, our design explicitly incorporates several stabilizing choices:
(i) using a moderately sized but task-relevant calibration set,
(ii) employing entropic regularization in OT to smooth noisy similarity estimates, and
(iii) restricting fusion to a small set of top-$k$ highly activated neurons to limit interference.
In practice, we find that lightweight data from the target language or domain is usually sufficient.
When no such data is available, approaches that rely on explicit supervision (e.g., distillation or adapters) may be more appropriate for that setting.

\noindent\textbf{Q2: Under what conditions can cross-architecture fusion fail despite sharp OT correspondences?}
Even when OT yields seemingly confident correspondences, effective fusion is not guaranteed if the target model lacks sufficient capacity to absorb the injected knowledge, or if the transferred features are misaligned with the target task.
These cases reflect fundamental capacity or task-mismatch constraints rather than failures of the transport mechanism itself.
Our method adopts a conservative strategy—top-$k$ neuron replacement combined with optional residual-frozen adaptation—to mitigate such risks by limiting the scope of transfer and allowing controlled recalibration of the remaining parameters.

\noindent\textbf{Q3: Can fusion propagate undesirable behaviors or biases from the source model?}
As with any direct parameter transfer method, our approach may propagate both beneficial capabilities and undesirable behaviors present in the source model.
This is an inherent consideration of weight-level reuse rather than a property unique to our framework.
In practical deployments, we recommend applying standard post-hoc safety and alignment pipelines (e.g., safety evaluation, filtering, or targeted alignment) to the fused model.
Incorporating safety-aware objectives directly into the transport formulation is an interesting direction for future work.

\end{document}